\documentclass{article}

\usepackage{microtype}
\usepackage{graphicx}
\usepackage{subcaption}
\usepackage{booktabs} % for professional tables

\usepackage{hyperref}

\usepackage[preprint]{icml2026}

\usepackage{amsmath}
\usepackage{amssymb}
\usepackage{mathtools}
\usepackage{amsthm}
\usepackage{multirow}
\usepackage{subcaption}
\usepackage{booktabs}
\usepackage{threeparttable}

\usepackage[capitalize,noabbrev]{cleveref}

\theoremstyle{plain}
\newtheorem{theorem}{Theorem}[section]
\newtheorem{proposition}[theorem]{Proposition}

\theoremstyle{definition}

\theoremstyle{remark}
\newtheorem{remark}[theorem]{Remark}

\usepackage[textsize=tiny]{todonotes}

\icmltitlerunning{GDEGAN for Ligand Binding Site Prediction}

\begin{document}

\twocolumn[
  \icmltitle{GDEGAN: Gaussian Dynamic Equivariant Graph Attention Network for Ligand Binding Site Prediction}

  % It is OKAY to include author information, even for blind submissions: the
  % style file will automatically remove it for you unless you've provided
  % the [accepted] option to the icml2026 package.

  % List of affiliations: The first argument should be a (short) identifier you
  % will use later to specify author affiliations Academic affiliations
  % should list Department, University, City, Region, Country Industry
  % affiliations should list Company, City, Region, Country

  % You can specify symbols, otherwise they are numbered in order. Ideally, you
  % should not use this facility. Affiliations will be numbered in order of
  % appearance and this is the preferred way.
  \icmlsetsymbol{equal}{*}

  \begin{icmlauthorlist}
    \icmlauthor{Animesh}{yyy}
    \icmlauthor{Plaban Kumar Bhowmick}{yyy}
    \icmlauthor{Pralay Mitra}{comp}
    % \icmlauthor{Firstname4 Lastname4}{sch}
    % \icmlauthor{Firstname5 Lastname5}{yyy}
    % \icmlauthor{Firstname6 Lastname6}{sch,yyy,comp}
    % \icmlauthor{Firstname7 Lastname7}{comp}
    % %\icmlauthor{}{sch}
    % \icmlauthor{Firstname8 Lastname8}{sch}
    % \icmlauthor{Firstname8 Lastname8}{yyy,comp}
    %\icmlauthor{}{sch}
    %\icmlauthor{}{sch}
  \end{icmlauthorlist}

  \icmlaffiliation{yyy}{Department of Artificial Intelligence, Indian Institute of Technology, Kharagpur, India}
  \icmlaffiliation{comp}{Department of Computer Science, Indian Institute of Technology, Kharagpur, India}
  % \icmlaffiliation{sch}{School of ZZZ, Institute of WWW, Location, Country}

  \icmlcorrespondingauthor{Animesh}{animesh.sachan24794@kgpian.iitkgp.ac.in}
  % \icmlcorrespondingauthor{Firstname2 Lastname2}{first2.last2@www.uk}

  % You may provide any keywords that you find helpful for describing your
  % paper; these are used to populate the "keywords" metadata in the PDF but
  % will not be shown in the document
  \icmlkeywords{Machine Learning, ICML}

  \vskip 0.3in
]

% this must go after the closing bracket ] following \twocolumn[ ...

% This command actually creates the footnote in the first column listing the
% affiliations and the copyright notice. The command takes one argument, which
% is text to display at the start of the footnote. The \icmlEqualContribution
% command is standard text for equal contribution. Remove it (just {}) if you
% do not need this facility.

% Use ONE of the following lines. DO NOT remove the command.
% If you have no special notice, KEEP empty braces:
\printAffiliationsAndNotice{}  % no special notice (required even if empty)
% Or, if applicable, use the standard equal contribution text:
% \printAffiliationsAndNotice{\icmlEqualContribution}

\begin{abstract}
Accurate prediction of binding sites of a given protein, to which ligands can bind, is a critical step in structure-based computational drug discovery.
Recently, Equivariant Graph Neural Networks (GNNs) have emerged as a powerful paradigm for binding site identification methods due to the large-scale availability of 3D structures of proteins via protein databases and AlphaFold predictions.
The state-of-the-art equivariant GNN methods implement dot product attention, disregarding the variation in the chemical and geometric properties of the neighboring residues.
To capture this variation, we propose GDEGAN (Gaussian Dynamic Equivariant Graph Attention Network), which replaces dot-product attention with adaptive kernels that recognize binding sites. The proposed attention mechanism captures variation in neighboring residues using statistics of their characteristic local feature distributions. Our mechanism dynamically computes neighborhood statistics at each layer, using local variance as an adaptive bandwidth parameter with learnable per-head temperatures, enabling each protein region to determine its own context-specific importance.
GDEGAN outperforms existing methods with relative improvements of 37-66\% in DCC and 7-19\% DCA success rates across COACH420, HOLO4k, and PDBBind2020 datasets. These advances have direct application in accelerating protein-ligand docking by identifying potential binding sites for therapeutic target identification.
\end{abstract}

\section{Introduction}

The functional behavior of proteins is governed mainly by their interaction with other molecules to modulate their function, such as small-molecule ligands \citep{prot_ligand}.
These interactions are precise, occurring at well-defined geometric and chemical regions on the protein surface known as binding sites or pockets.
Precisely predicting potential binding sites based on a protein's 3D structure is a foundational challenge in rational drug design \citep{pocket_dd} and structural biology \citep{structure_b}.
The success of models like AlphaFold \citep{alphafold, alphafold3} in accurately predicting protein 3D structures has significantly advanced the capabilities of structure-based drug design methodologies \citep{alphafold2, computational_dd}.
Within this field, it is crucial to differentiate between two complementary computational tasks. 
The first, \textit{protein-ligand binding sites} identification from 3D structures of proteins, is the fundamental challenge of discovering surface pockets capable of binding novel or unknown ligands. This is particularly vital for the majority of proteins with no known ligands or binding partners. The second, \textit{docking} \citep{e3bind,equibind,tankbind} builds upon this by predicting the precise binding pose and orientation of a known ligand within a target site. While historically these have been formidable challenges, both have recently seen significant progress driven by the application of geometric deep learning \citep{equibind, gearbind, tankbind, e3bind, pocketdrug, deciphering, geometric_binding, equidock, egnn, equipocket, schnet}, leading to powerful new tools for drug discovery. Additionally, a complementary line of research focuses on binding affinity prediction for known protein-ligand pairs \citep{holoprot, gearbind}, which ranks the strength of interactions rather than identifying binding locations.

% While docking predicts the pose of a known ligand, ligand binding site identification is the more fundamental task of discovering surface pockets capable of binding unknown ligands, as for the majority of proteins, no ligand is known.
% This project's docking and ligand binding site identification as distinct challenges in structure-based drug design. Both of these historically challenging tasks have recently seen significant advances driven by the application of geometric deep learning \citep{equibind, tankbind, e3bind, pocketdrug, deciphering, geometric_binding, equidock, egnn, equipocket, schnet}.

\textbf{Ligand Binding Site (LBS) Identification.} Over the years, various computational methods from classical machine learning to deep learning have been successfully employed for LBS identification, combining proteins' physical, chemical and geometric information.
Earlier approaches, including P2Rank \citep{p2rank}, a random-forest based technique that utilized protein surface information, and Fpocket \citep{fpocket}, which depended on Voronoi tessellation and alpha spheres \citep{ligandanatomy} for efficacy, are constrained by the limited expressivity of protein representation.
Early applications of deep learning to this problem, pioneered by Convolution Neural Networks (CNNs) \citep{cnn}, led to various methods including DeepSite \citep{deepsite}, DeepPocket \citep{deeppocket} and DeepSurf \citep{deepsurf} treating proteins as 3D volumetric data and applying 3D CNNs to predict binding regions.
However, these voxel-based approaches under-perform due to their fundamental misalignment with the irregular, sparse nature of protein structures. More importantly, they are sensitive to the protein's orientation in 3D space \citep{equipocket}.
These limitations motivated representing proteins more naturally as graphs, where the atoms or residues serve as nodes and the interactions between them are the edges. 
This perspective is perfectly suited for Graph Neural Networks (GNNs), and in particular, equivariant GNNs \citep{egnn}, which have become the standard for 3D geometric deep learning. 
By design, these models respect the rotational and translational symmetries of the physical world, directly addressing the key failure of CNNs. 
Consequently, modern methods like EquiPocket \citep{equipocket}, which utilize EGNN \citep{egnn} as backbone, have proven to be powerful for LBS identification.

\textbf{Equivariant Graph Neural Networks.} Equivariant graph neural networks have progressed in two directions: scalarization-based models \citep{egnn, schnet, painn, leftnet} and high-degree steerable models \citep{nequip, mace, tlearning, orbnet, equiformer, equiformer2}. 
The scalarization-based models function by transforming 3D data, such as coordinates, into scalar characteristics (e.g., distance), hence enhancing computational efficiency and scalability.
Nonetheless, their expressivity is constrained in contexts such as protein data modeling, where the comprehension of symmetry and spatial relationships is essential for capturing geometric patterns.
In contrast, high-degree steerable models operate directly on rich geometric features (irreducible representations) via the Clebsch-Gordan product, preserving essential spatial relationships.
Despite their robust theoretical foundation and superior performance, they are computationally intensive, particularly for larger graphs such as proteins \cite{high_comp}. 
GotenNet \citep{gotennet} presents a solution that balances expressiveness and computing efficiency by implementing a spherical-scalarization model. 
Building on this, we adopt GotenNet \citep{gotennet} as the backbone for our model, applying its efficient framework for processing higher-degree features to the protein-ligand binding site identification task.

\textbf{Yet even Equivariant GNNs applied to proteins exhibit a critical limitation.}
While recent E(3)-equivariant methods, such as EquiPocket, the current state-of-the-art \citep{equipocket} method, have significantly advanced ligand binding site identification.
Their message-passing frameworks are inherently based on a static, context-agnostic attention mechanism known as dot-product attention or self-attention, formally expressed as \(\alpha_{ij} \propto exp(f(h_i, h_j)) \), where \(f\) quantifies similarity.
These methods employ a globally fixed similarity metric to assess inter-atomic importance, which is fundamentally misaligned with the nature of proteins, as they are characterized by extreme structural and chemical heterogeneity.

\textbf{Our Approach.} Our approach exploits the key insight that binding sites often appear as statistically distinct, tightly clustered regions compared to the rest of the protein surface. Inspired by recent work in probabilistic attention for non-geometric modalities \citep{gaussian}, we introduce Gaussian Dynamic Attention (GDA), which replaces dot-product attention with dynamic, context-aware statistical fitting.
Our proposed model GDEGAN, computes attention scores by measuring how statistically probable a neighboring atom's features are, given a Gaussian distribution defined by the target atom's local neighborhood.
By dynamically computing the mean and variance of each atom's local environment at every layer, our attention mechanism becomes inherently adaptive.
This adaptation to the emergent local geometry of the protein graphs provides a more powerful and physically grounded inductive bias, enabling the model to learn more robust representations of complex protein structures.

\textbf{Contributions.} In this work, we aim to improve on finding the most probable binding candidates for LBS identification task by assigning dynamic and context-aware attention. To this end, we make the following contributions:
\begin{itemize}
    \item We introduce Gaussian Dynamic Attention mechanism that characterizes each atom's neighborhood using learnable Gaussian parameters. This design preserves E(3)-equivariance by computing attention from invariant local statistics.

    % \item We investigate the use of high-degree steerable E(3)-Equivariant GNNs to the critical task of protein-ligand binding site identification, demonstrating their effectiveness in capturing complex geometries.

    \item We establish that high-degree steerable equivariant representations and Gaussian Dynamic Attention, through their interplay, capture the geometric heterogeneity of protein-ligand binding sites more effectively, enabling precise discrimination between binding and non-binding regions.

    \item We demonstrate through extensive experiments that our proposed GDEGAN surpasses state-of-the-art methods on multiple benchmarks and achieves a significant improvement in inference speed, validating the efficacy and efficiency of our adaptive attention design.
\end{itemize}

\section{Preliminaries}
\textbf{Protein Graph Representation.} 
We represent a protein structure as a geometric graph $\mathcal{G} = (\mathcal{V}, \mathcal{E}, \mathcal{P})$, where $\mathcal{V}$ denotes the set of $N$ residues, $\mathcal{E}$ denotes edges between spatially proximate residues, and $\mathcal{P} = \{\mathbf{p}_i \in \mathbb{R}^3\}_{i=1}^N$ represents the 3D coordinates of $C_\alpha$ atoms. Each node $v_i \in \mathcal{V}$ is characterized by it's scaler features $h_i \in \mathbb{R}^{n_d}$ and equivariant features \(\mathbf{\Tilde{X}}^{(l)} \in \mathbb{R}^{(2l+1) \times h_d}\) of degree \(l \in \{1, ..., L_{\max}\}\). Edges connect residues within a spatial cutoff: $\mathcal{E} = \{(i,j): \|\mathbf{p}_i - \mathbf{p}_j\| < r_c, i \neq j\}$, where $r_c$ is set to 10 Å for capturing relevant interactions and \(n_d\) is initial node feature dimension and \(h_d\) denotes node embedding hidden dimensions.
To initialize node features, we use pre-trained ESM-2 embeddings $h_i \in \mathbb{R}^{n_d}$, which are then projected to $h_d$ dimension using learned transformations. These embeddings capture sequence context and evolutionary patterns needed to identify binding sites. Nodes are labeled binding ($y_i = 1$) or non-binding ($y_i = 0$) based on closeness to ligand atoms during training detailed on Appendices~\ref{prtein_rep} and~\ref{binding_sites}.

\textbf{Equivariant Geometric Tensors}
Here, we describe different representations for both scalars and tensors. Tensor representations are initialized using spherical harmonics to capture spatial information from rank \(0\) to \(L_{max}\). Edge geometry is encoded via spherical harmonics: $\Tilde{\mathbf{r}}_{ij}^{(l)} = Y^{(l)}(\hat{\mathbf{r}}_{ij})$ where $\hat{\mathbf{r}}_{ij} = (\mathbf{p}_i - \mathbf{p}_j)/\|\mathbf{p}_i - \mathbf{p}_j\|_2$ is the unit vector and \(Y^{(l)}: S^2 \to \mathbb{R}^{2l+1}\) denotes the degree-$l$ spherical harmonics that map the unit sphere to a $(2l+1)$ dimensional vector.
These basis functions enable the network to process geometric information while preserving equivariance. 
For $l=0$, \(\Tilde{\mathbf{r}}^{(0)}_{ij}\) : scalar invariants; $l=1$, \(\Tilde{\mathbf{r}}^{(1)}_{ij}\): directional vectors; $l=2$, \(\Tilde{\mathbf{r}}^{(2)}_{ij}\): quadrupole moments. 
Where, \(\mathbf{\Tilde{r}}_{ij}^l\) is initialized based on their relative positions \(\mathbf{p}_i\) and \(\mathbf{p}_j\) of nodes \(i\) and \(j\) in increasing order of geometric complexity, specifically \(\mathbf{\Tilde{r}}_{ij} = \{ \mathbf{\Tilde{r}}^{(0)}, \mathbf{\Tilde{r}}^{(1)}, ..., \mathbf{\Tilde{r}}^{(L_{max})}  \} \) and \(\mathbf{\Tilde{.}}\) represents the steerable features.
Node features comprise of invariant scalars \(h \in \mathbb{R}^{n_d}\) and equivariant high degree steerable features \(\mathbf{\Tilde{X}}^{(l)} \). 
These features transform predictably under E(3) operations (we provide formal definitions of E(3)-equivariance and invariance in Appendix \ref{app:quiv_inv}), with scalars remaining invariant and steerable features transforming according to their degree \(l\).

\textbf{Task Formulation.}
Given a geometric protein graph $\mathcal{G} = (\mathcal{V}, \mathcal{E}, \mathcal{P})$, we formulate the task as learning an equivariant model \(f(\mathcal{G}, y)\) to predict the binding probabilities $\hat{y}_i \in [0,1]$ for each residue $v_i \in \mathcal{V}$.

\section{GDEGAN: Gaussian Dynamic Equivariant Graph Attention Network}

The key observation motivating our approach is that binding sites exhibit different geometric patterns, with varying curvature and chemical properties. Standard dot-product attention \citep{gotennet,equipocket} applies uniform similarity metrics to atomic neighborhoods constraining its efficacy in predicting protein-ligand binding sites, where geometric heterogeneity plays a crucial role.
We address this through three components: geometry-aware protein embeddings that combine evolutionary and structural information, Gaussian Dynamic Attention (statistical attention) that adapts to local variance, and directed supervision. The complete architecture is illustrated in Figure~\ref{fig:gdegan_architecture}.

% \section{GDEGAN: Gaussian Dynamic Equivariant Graph Attention Network}
% We enhance GotenNet \citep{gotennet} by incorporating a Gaussian dynamic attention module. Although GotenNet attains superior performance in molecular property prediction via equivariant tensor attention, its uniform attention approach to atomic neighborhoods constrains its efficacy in predicting protein-ligand binding sites, where geometric heterogeneity plays a crucial role. To address this limitation, we leverage the observation of binding sites having different geometric patterns, with varying curvature and chemical properties. This motivates three key modifications: statistical attention that adapts to local variance, protein-specific embeddings, and directed supervision. The complete architecture of GDEGAN is illustrated in Figure \ref{fig:gdegan_architecture}.

\begin{figure*}[t]
    \centering
    \includegraphics[width=0.85\textwidth]{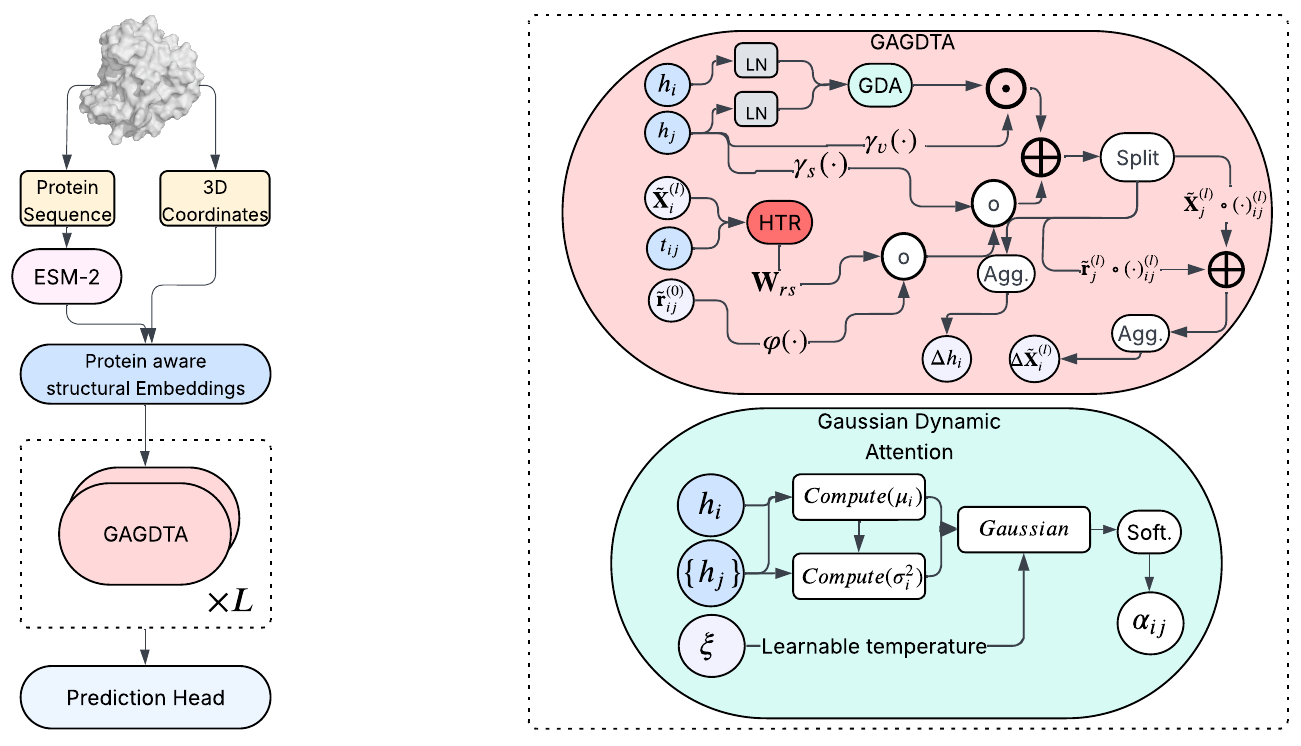}
    \caption{\textbf{GDEGAN architecture for protein ligand binding site identification.}
    \textbf{Left:} Overview of the GDEGAN framework showing the integration of protein-specific ESM-2 embeddings with geometric processing through $L$ layers of Gaussian Dynamic Attention. 
    \textbf{Right:} Detailed view of the Gaussian Dynamic Attention (GDA) module. In this \(\oplus\), \(\cdot\) and \(\circ\) denotes addition, dot product and element-wise product respectively. HTR is inherited from GotenNet \citep{gotennet}. Soft. stands for Softmax, Agg. for Aggregation.}
    \label{fig:gdegan_architecture}
\end{figure*}

\subsection{Geometry-Aware Protein Embeddings}
% We use a hierarchical embedding strategy integrating protein-specific evolutionary information with local geometric context. 
Unlike standard molecular GNNs that rely solely on atomic properties, our approach leverages pre-trained protein representations while incorporating spatial relationships through geometric information, allowing efficient message passing for both nodes and edges.
Given pre-computed ESM-2 \citep{esm2} embeddings for each residue \(h_i \in \mathbb{R}^{n_d}\), we construct initial node features through a two-stage process that combines evolutionary information with local structural context. We use ESM-2 \citep{esm2} embeddings as they provide with sequential information orthogonal to our geometric processing, unlike structure aware alternatives like ProstT5 \citep{prostt5} that would create geometric features redundancy \citep{atomsurf}.

\textbf{Neighborhood-Aware Message Aggregation.} For each residue \(i\), we aggregate information from spatial neighbors:
\begin{equation}
\mathbf{m}_i = \sum_{j \in \mathcal{N}(i)} \mathbf{W}_{\text{a}}(h_j) \circ \left(\phi(\Tilde{\mathbf{r}}^{(0)}_{ij}) \mathbf{W}_{\text{rbf}}\right) \circ \varphi(\Tilde{\mathbf{r}}^{(0)}_{ij})
\end{equation}
where $\mathcal{N}(i) = \{j : \|\mathbf{p}_i - \mathbf{p}_j\| < r_c, j \neq i\}$ defines the spatial neighborhood, $\phi: \mathbb{R} \rightarrow \mathbb{R}^{K}$ represents $K$ radial basis functions encoding distances, $\mathbf{W}_{\text{rbf}} \in \mathbb{R}^{K \times {h_d}}$ projects RBF features, $\mathbf{W}_{\text{a}} \in \mathbb{R}^{{n_d} \times {h_d}}$ projects node evolutionary features, \(\circ\) denotes element-wise product and $\varphi(\Tilde{\mathbf{r}}^{(0)}_{ij})$ is a smooth cutoff function ensuring differentiability.

\textbf{Context-Enriched Feature Construction.} The final node node features combining self and neighborhood information:
\begin{equation}
h_i^{(0)} = \mathbf{W}_u \left( \sigma\left(\text{LN}\left[\mathbf{W}_h (h_i \| \mathbf{m}_i)\right] \mathbf{W}_d\right)\right)
\end{equation}
where $\|$ denotes concatenation, $\text{LN}$ is layer normalization for training stability, $\sigma$ is the activation function, and $\mathbf{W}_h, \mathbf{W}_d, \mathbf{W}_u$ are learned projections. This formulation enables each residue to incorporate both its evolutionary signature and local structural environment.

\textbf{Geometry-Aware Edge Scaler Features.} Edge scaler features are initialized to capture pairwise relationships enhanced by spatial information:
\begin{equation}
t_{ij}^{(0)} = (h_i^{(0)} + h_j^{(0)}) \circ \left(\phi(\Tilde{\mathbf{r}}^{(0)}_{ij})\mathbf{W}_{\text{e}}\right)
\end{equation}
This symmetric formulation guarantees that $\mathbf{t}_{ij} = \mathbf{t}_{ji}$, preserving consistency in undirected protein graphs while integrating distance-dependent modulation via RBF-encoded spatial characteristics.  Here, \(\mathbf{t}_{ij}^{(0)} \in \mathbb{R}^{e_d}\) represents the edge features of dimension \(e_d\), and \(\mathbf{W}_{\text{e}}\) denotes a learned transformation matrix.

\textbf{High Degree Equivariant Steerable features Initialization.} These high degree steerable features that capture complex geometric information are initialized as \(\mathbf{0}\) initially and are updated during attention aware feature update module, which will be discussed in the later section.
\begin{equation}
\Tilde{\mathbf{X}}_i^{(l),(0)} = \mathbf{0} \in \mathbb{R}^{(2l+1) \times h_d}, \quad \forall l \in \{1, ..., L_{\max}\}
\end{equation}

\subsection{Geometry Aware Gaussian Dynamic Tensor Attention}
Unlike standard dot-product attention that treats all nodes uniformly, protein binding sites exhibit distinct geometric and chemical patterns: binding pockets are characterized by clustered residues with specific spatial arrangements, while other surface regions show more dispersed distributions \citep{p2rank}. We hypothesize that high local chemical diversity translates to a high variance in a learned feature space of the surrounding residue. Therefore, local feature variance acts as a reliable signal for identifying functionally significant transition areas. This hypothesis is grounded in prior observations that binding pockets exhibit geometric and chemical heterogeneity \citep{anatomy}, a property implicitly exploited by successful classical methods like P2Rank \citep{p2rank} through chemical diversity descriptors (detailed analysis in Appendix \ref{validation_corelation_analysis}).
Our Gaussian Dynamic Attention exploits this inherent heterogeneity by computing local neighborhood statistics $(\mu_i, \sigma_i^2)$ from the ESM-2 features $h_i$, enabling adaptive attention that responds to the local geometric context. The variance \(\sigma_i^2\) modulates attention weights: high variance amplifies attention to capture complex binding site boundaries, while lower variance reduces unnecessary focus.
Further, it is enhanced by incorporating spatial information. Specifically, for each residue $i$, with neighborhood \(\mathcal{N}(i)\) we compute:
\begin{align}
\mu_i &= \frac{1}{|\mathcal{N}(i)|} \sum_{j \in \mathcal{N}(i)} h_j
\end{align}
\begin{align}
    (\sigma_i)^2 &= \frac{1}{|\mathcal{N}(i)|} \sum_{j \in \mathcal{N}(i)} (h_j - \mu_i)^2
\end{align}
\begin{align}
    d_{ij}^{\text{scaled}} &= \frac{h_j - h_i}{\sqrt{\sigma_i^2 + \epsilon}}
\end{align}

These statistics provide a distributional summary of the local neighborhood, capturing both the central tendency and spread of features. Where $d_{ij}$ is dimension wise normalization, with stability factor $\epsilon$ (Appendix \ref{training_stability} for detailed training stability analysis).

\textbf{Learnable Gaussian Parameters.} 
We introduce $H$ learnable variance parameter $\xi$ that controls the temperature of attention for each of the $H$ attention heads. This parameter adaptively modulates the sensitivity of attention to feature differences, allowing each head to specialize in different scales of molecular interactions. For molecular graphs, we compute the Gaussian attention score directly from pairwise node differences:

\begin{equation}
\label{alpha_}
\alpha_{ij} = \frac{\exp\left(-\frac{(\|d_{ij}^{\text{scaled}}\|_2)^2}{2\xi}\right)}{\sum_{k \in \mathcal{N}(i)} \exp\left(-\frac{(\|d_{ik}^{\text{scaled}}\|_2)^2}{2\xi}\right)}
\end{equation}
where $\|d_{ij}^{\text{scaled}}\|_2$ is the L2 norm of the variance-normalized feature differences.

\subsection{Feature Updates and Hierarchical Refinement}
\textbf{Attention-Weighted Message Passing.} To combine gaussian dynamic attention with equivariant features we follow GotenNet \citep{gotennet} framework for updating both scalar and steerable features. Given attention scores \(\alpha_{ij}\) from equation \ref{alpha_} we compute attention-weighted messages and combine them with geometric encoding and then splits into $S$ components:
\begin{align}
\mathbf{o}_{ij} &= \alpha_{ij} \cdot \gamma_v(h_j) + (t_{ij}\mathbf{W}_{rs}) \circ \gamma_s(h_j) \circ \varphi(\tilde{\mathbf{r}}^{(0)}_{ij})
\end{align}
\begin{align}
    \{o_{ij}^s, \{o_{ij}^{d,(l)}\}_{l=1}^{L_{\max}}, \{o_{ij}^{t,(l)}\}_{l=1}^{L_{\max}}\} &= \text{split}(\mathbf{o}_{ij}, h_d)
\end{align}

Here \(\gamma_v , \gamma_s : \mathbb{R}^{{h_d}} \rightarrow \mathbb{R}^{S \cdot {h_d}} \) are MLPs and \(\mathbf{W}_{rs} \in \mathbb{R}^{e_d \times (S \cdot h_d)}\) is a learnable weight matrix. \(S\) is multiplying factor to generate different coefficients for different \(l\) values calculated as \(1+2 \times L_{max}\).
Finally, features are updated maintaining equivariance:
\begin{align}
\Delta h_i &= \oplus_{j \in \mathcal{N}(i)} (o_{ij}^s) \\
\Delta \Tilde{\mathbf{X}}_i^{(l)} &= \oplus_{j \in \mathcal{N}(i)} \left[o_{ij}^{d,(l)} \circ \Tilde{\mathbf{r}}_{ij}^{(l)} + o_{ij}^{t,(l)} \circ \Tilde{\mathbf{X}}_j^{(l)}\right]
\end{align}
Here, each degree \(l \in \left[ 1, L_{max} \right]\) contributes its own component and representations of residues are updated as follows:
\begin{equation}
h_i \leftarrow h_i + \Delta h_i , \quad \Tilde{\mathbf{X}}_i^{(l)} \leftarrow \Tilde{\mathbf{X}}_i^{(l)} + \Delta \Tilde{\mathbf{X}}_i^{(l)}
\end{equation}
By replacing uniform dot-product attention with geometry-aware Gaussian attention that adapt to local feature distributions, GDA enables precise discrimination between binding pockets and surface regions. This adaptive mechanism proves particularly effective for ligand binding site prediction, where the inherent heterogeneity of protein surfaces from tightly clustered binding pockets to dispersed surface residues demands context-aware attention patterns, as demonstrated in our experiments Section \ref{experiments}.

\textbf{Edge refinement and Channel Mixing}
Following the GAGDTA layer, we adopt GotenNet \citep{gotennet} hierarchical tensor refinement (HTR) and equivariant feed-forward (EQFF) modules with minimal modifications to maintain architectural consistency. To summarize:

Edge features are refined using inner products of high-degree steerable features:
\begin{equation}
\mathbf{w}_{ij} = \text{Agg}_{l=1}^{L_{\max}} \langle \Tilde{\mathbf{X}}_i^{(l)}\mathbf{W}_q, \Tilde{\mathbf{X}}_j^{(l)}\mathbf{W}_k^{(l)} \rangle 
\end{equation}
\begin{equation}
    \quad
t_{ij} \leftarrow t_{ij} + \gamma_w(\mathbf{w}_{ij}) \circ \gamma_t(t_{ij})
\end{equation}
where \(\mathbf{w}_{ij}\) is aggregated similarity between node \(i\) and \(j\), \(\mathbf{W}_q , \mathbf{W}_k \in \mathbb{R}^{{e_d} \times {e_{d}}}\) are tensor projection matrices where, \(\mathbf{W}_q\) is shared across degree \(l \in [1,..L_{max}]\) and \(\mathbf{W}_k^{(l)}\) is degree specific. \(\gamma_w:\mathbb{R}^{e_{d}} \rightarrow \mathbb{R}^{e_d}\) and \(\gamma_t: \mathbb{R}^{e_d} \rightarrow \mathbb{R}^{e_d} \) are MLPs.

This refinement enriches edge representations with geometric information extracted from steerable features, enhancing the model's ability to capture spatial relationships between binding and non-binding residues. Finally, EQFF enables efficient channel wise interaction while preserving equivariance:
\begin{equation}
\text{EQFF}(h, \Tilde{\mathbf{X}}^{(l)}) = \left((h + m_1) || (\Tilde{\mathbf{X}}^{(l)} + m_2 \circ \Tilde{\mathbf{X}}^{(l)}\mathbf{W}_v)\right)
\end{equation}
where $(m_1, m_2) = \text{split}(\gamma_m((\|\Tilde{\mathbf{X}}^{(l)}\mathbf{W}_v\|_2 ) || h))$ are modulation factors computed from feature norms. \(\mathbf{W}_v\) is learnable weight matrices, \(\| \cdot \|_2\) denotes \(L_2\) norm and \(\gamma_m \) is and MLP.

\subsection{Theoretical Properties of GDEGAN}
We analyze the equivariance properties of GDEGAN under our architectural modifications. While the backbone GotenNet \citep{gotennet} architecture is E(3)-equivariant, the introduction of ESM-embeddings \citep{esm2} as node features, we establish the following:

\begin{proposition} [From \(E(3)\) to \(SE(3)\) Equivariance with Invariant Node Features]
\label{prop:equivariance_prop}
The \(E(3)\) equivariance breaks after the introduction of ESM-embeddings because, now node features do not encode chirality information. Therefore, the network maintains \(SE(3)\) equivariance but loses reflection equivariance.
\end{proposition}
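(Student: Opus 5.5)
We prove the two halves separately: that every component of GDEGAN commutes with proper rigid motions, and that no mechanism can make the output respond to improper ones. Let $g=(\mathbf{R},\mathbf{t})$ act on coordinates by $\mathbf{p}_i\mapsto \mathbf{R}\mathbf{p}_i+\mathbf{t}$. In this action the ESM-2 embeddings $h_i$ are held fixed, because they depend only on sequence. We then proceed layer by layer.

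\textbf{SE(3) part (induction on layers).} First we list the invariant quantities. The edge set $\mathcal{E}$, the neighborhoods $\mathcal{N}(i)$, the radial quantities $\phi(\tilde{\mathbf{r}}^{(0)}_{ij})$ and the cutoff $\varphi(\tilde{\mathbf{r}}^{(0)}_{ij})$ depend only on $\|\mathbf{p}_i-\mathbf{p}_j\|$. They are therefore invariant under all of $E(3)$.

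This gives the base case. The messages $\mathbf{m}_i$, the node features $h_i^{(0)}$ and the edge features $t_{ij}^{(0)}$ are all invariant, and $\tilde{\mathbf{X}}^{(l),(0)}=\mathbf{0}$ is trivially equivariant.

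For the inductive step, assume $h$ and $t$ are invariant and $\tilde{\mathbf{X}}^{(l)}\mapsto D^{(l)}(\mathbf{R})\tilde{\mathbf{X}}^{(l)}$, where $D^{(l)}$ is the Wigner matrix of degree $l$.
\begin{itemize}
\item The Gaussian Dynamic Attention statistics $\mu_i,\sigma_i^2,d_{ij}^{\text{scaled}}$ are built only from $h$, so $\alpha_{ij}$ in \eqref{alpha_} is invariant. The key point is that the new attention introduces no geometric dependence beyond scalars.
\item It follows that $\mathbf{o}_{ij}$ is invariant, and hence so is $\Delta h_i$.
\item In $\Delta\tilde{\mathbf{X}}^{(l)}_i$, each term is an invariant coefficient times either $Y^{(l)}(\hat{\mathbf{r}}_{ij})$ or $\tilde{\mathbf{X}}^{(l)}_j$. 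Using $Y^{(l)}(\mathbf{R}\hat{\mathbf{r}})=D^{(l)}(\mathbf{R})Y^{(l)}(\hat{\mathbf{r}})$, each term transforms by $D^{(l)}$.
\item In HTR, the inner products $\langle \tilde{\mathbf{X}}_i^{(l)}\mathbf{W}_q,\tilde{\mathbf{X}}_j^{(l)}\mathbf{W}_k^{(l)}\rangle$ are invariant because $D^{(l)}$ is orthogonal and the weights act on channels only.
\item In EQFF, the norms are invariant and the channel-wise scaling preserves the $D^{(l)}$ action.
\end{itemize}
Translations drop out everywhere, since only differences $\mathbf{p}_i-\mathbf{p}_j$ appear. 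This establishes $SE(3)$ equivariance.

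\textbf{Reflection part.} Take $\mathbf{R}$ with $\det\mathbf{R}=-1$, for example the inversion $-\mathbf{I}$. Spherical harmonics satisfy $Y^{(l)}(-\hat{\mathbf{r}})=(-1)^lY^{(l)}(\hat{\mathbf{r}})$, so the steerable channels pick up parity signs. The scalar channel, however, is a function of distances and of $h$ only.

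A mirrored protein has the same pairwise distances as the original. It would also be assigned the same ESM-2 features, because the sequence is unchanged even though the handedness of chiral centers and helices is reversed. Hence the invariant outputs, including the binding probabilities $\hat y_i$, are identical for a structure and its mirror image. Physically, however, the mirror image has opposite chirality: it is a different molecule with different binding behavior.

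The correct reading of ``loses reflection equivariance'' is therefore that the model is $O(3)$-blind. It cannot distinguish enantiomers, so the relevant symmetry group for a physically faithful model is $SE(3)$, not $E(3)$. On the fixed input class of L-amino-acid proteins with their ESM features, only proper rotations relate physically equivalent inputs.

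To establish this rigorously, we would exhibit one concrete pair: a structure $\mathcal{P}$ and its mirror $-\mathcal{P}$. Under the paper's reading, ESM-2 features carry no information about chirality, so this pair receives the same features and the same invariant outputs, while the true labels of the two mirror configurations need not coincide. This shows that the learned map cannot be consistent with parity-aware physics.

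\textbf{Main obstacle.} The main difficulty is formalizing the claim itself. Read literally, the architecture's scalar output is invariant under reflections, with steerable features transforming with parity, so ``breaking'' $E(3)$ must mean something other than a failure of the equations to commute. I would define the intended notion as equivariance relative to a chirality-aware ground truth, and then prove the statement under that definition using the mirror-pair construction above.
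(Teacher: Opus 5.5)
Your $SE(3)$ half is correct and more explicit than the paper's proof, which only notes that the ESM features are unchanged by $(\mathbf{R},\mathbf{t})$ and leans on the backbone. The gap is in the reflection half. The ``main obstacle'' you flag is not a formalization issue; it is fatal.

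Nothing in your inductive step uses $\det\mathbf{R}=+1$. For every $\mathbf{R}\in O(3)$ one still has $Y^{(l)}(\mathbf{R}\hat{\mathbf{r}})=D^{(l)}(\mathbf{R})\,Y^{(l)}(\hat{\mathbf{r}})$ with $D^{(l)}(\mathbf{R})$ orthogonal; for instance $D^{(l)}(-\mathbf{I})=(-1)^l\mathbf{I}$. So the GDA weights, the message updates, the HTR inner products and the EQFF norms all go through verbatim. Carried to completion, your induction proves that GDEGAN is $E(3)$-equivariant in exactly the sense of Appendix~\ref{app:quiv_inv}: positions are acted on by $O(3)$ and translations, invariant features $h$ are held fixed, degree-$l$ features carry parity $(-1)^l$, and scalar outputs are reflection-invariant.

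Your mirror-pair observation (same distances, same $h$, hence same $\hat y_i$) \emph{is} this reflection invariance. It verifies the $E(3)$ condition for scalar outputs rather than violating it. The paper's proof uses the same mirror-pair construction and makes the same non sequitur (``cannot distinguish $\mathcal{M}$ and $\mathcal{M}'$, hence not equivariant under reflections''). A model that is $SE(3)$- but not $E(3)$-equivariant is precisely one that \emph{can} separate enantiomers. That requires parity-odd ingredients such as signed dihedrals, cross products or odd Clebsch--Gordan paths, and none of these occur here. Your base case also uses nothing specific to ESM-2. It applies verbatim to GotenNet's atomic-number embeddings, which are equally invariant and chirality-blind, so no argument of this shape can make the symmetry group change when one invariant input is swapped for another.

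Your proposed rescue, reading the claim as equivariance relative to a chirality-aware ground truth, does not close the gap. It turns the proposition into a statement about the target rather than the network. What it would show is that the network has \emph{more} symmetry than the target, which is not a loss of reflection equivariance.

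Its key step, that the labels of $\mathcal{P}$ and its mirror need not coincide, also fails for the labels the paper actually uses. Here $y_i$ records whether some atom of residue $i$ lies within $d_{bind}$ of some ligand atom. Reflecting the whole complex preserves every such distance, so all $y_i$ are unchanged; physically, the mirror protein binds the mirror ligand at the same residues. The labels could differ only if the ligand were not reflected along with the protein, and that is not the group action in the definition.

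So the second sentence of the proposition cannot be proved under the paper's own definition. What your argument actually establishes is the corrected claim: GDEGAN, with or without ESM features, is $E(3)$-equivariant and hence chirality-blind.
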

\textit{Proof in Appendix~\ref{proof:eq_proof}.}

% Despite this reduction, our Gaussian attention mechanism preserves the remaining symmetry:

\begin{proposition}[GDA Preserves \(SE(3)\) Equivariance]
\label{prop:gaussian_equivariance}
The Gaussian Dynamic Attention mechanism, when applied to invariant scalar features from ESM-embeddings, preserves the \(SE(3)\) equivariance of the message-passing framework.
\end{proposition}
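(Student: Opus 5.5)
We prove the claim by tracking how every quantity inside one GDA layer transforms under an arbitrary $g=(\mathbf{R},\mathbf{t})\in SE(3)$, acting as $\mathbf{p}_i\mapsto \mathbf{R}\mathbf{p}_i+\mathbf{t}$. The inductive hypothesis at layer input is that the scalars $h_i$ and edge features $t_{ij}$ are invariant and the steerable features transform as $\Tilde{\mathbf{X}}_i^{(l)}\mapsto \mathbf{D}^{(l)}(\mathbf{R})\Tilde{\mathbf{X}}_i^{(l)}$, with $\mathbf{D}^{(l)}$ the Wigner-D matrix. At layer $0$ this holds for three reasons. The ESM-2 embeddings depend only on the sequence. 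The messages $\mathbf{m}_i$ and $t_{ij}^{(0)}$ depend on positions only through $\Tilde{\mathbf{r}}^{(0)}_{ij}=\|\mathbf{p}_i-\mathbf{p}_j\|$. The steerable features are initialized to $\mathbf{0}$, which is trivially equivariant. By Proposition~\ref{prop:equivariance_prop} the best we can hope for is $SE(3)$, so we restrict to $\det\mathbf{R}=+1$ throughout.

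\textbf{Step 1: invariance of the neighborhood.} Since $\|\mathbf{R}\mathbf{p}_i+\mathbf{t}-\mathbf{R}\mathbf{p}_j-\mathbf{t}\|=\|\mathbf{p}_i-\mathbf{p}_j\|$, the set $\mathcal{N}(i)$ and the cutoff $r_c$ are unchanged by $g$. The graph topology is therefore invariant.

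\textbf{Step 2: invariance of the statistics and attention.} The quantities $\mu_i$, $\sigma_i^2$ and $d_{ij}^{\text{scaled}}$ are built from invariant scalars $h_j$ indexed over an invariant set, so they are invariant. The score $\alpha_{ij}$ in \eqref{alpha_} is a fixed function of $\|d_{ij}^{\text{scaled}}\|_2$ and the learnable per-head $\xi$, so it is invariant too. This is the step where the Gaussian mechanism differs from dot-product attention, and it is essentially immediate: no coordinate enters the statistics at all.

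\textbf{Step 3: equivariance of the updates.} Consider the message $\mathbf{o}_{ij}=\alpha_{ij}\gamma_v(h_j)+(t_{ij}\mathbf{W}_{rs})\circ\gamma_s(h_j)\circ\varphi(\Tilde{\mathbf{r}}^{(0)}_{ij})$. Each factor is invariant, so $\mathbf{o}_{ij}$ is invariant, and so are all of its split components. This gives the invariance of $\Delta h_i$.

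For the steerable update, the spherical harmonics satisfy $Y^{(l)}(\mathbf{R}\hat{\mathbf{r}})=\mathbf{D}^{(l)}(\mathbf{R})Y^{(l)}(\hat{\mathbf{r}})$, and translations cancel in $\hat{\mathbf{r}}_{ij}$. The scalar coefficients $o^{d,(l)}_{ij}$ and $o^{t,(l)}_{ij}$ act only on the channel index, so they commute with $\mathbf{D}^{(l)}$, which acts on the $(2l+1)$ index. Hence $\Delta\Tilde{\mathbf{X}}_i^{(l)}\mapsto \mathbf{D}^{(l)}\Delta\Tilde{\mathbf{X}}_i^{(l)}$, and the residual update preserves the hypothesis.

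\textbf{Step 4: HTR and EQFF.} For the edge refinement, $\langle \mathbf{D}\Tilde{\mathbf{X}}_i\mathbf{W}_q,\mathbf{D}\Tilde{\mathbf{X}}_j\mathbf{W}_k\rangle$ is invariant because $\mathbf{D}^{(l)}$ is orthogonal. So $\mathbf{w}_{ij}$ is invariant, and therefore so is the updated $t_{ij}$. In EQFF, the norms $\|\Tilde{\mathbf{X}}^{(l)}\mathbf{W}_v\|_2$ are invariant, the right-multiplication by $\mathbf{W}_v$ commutes with $\mathbf{D}^{(l)}$, and the gate $m_2$ is invariant. Thus the output transforms correctly, and induction over layers closes the argument. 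The final readout $\hat y_i$ is a function of invariant scalars, so it is invariant.

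\textbf{Main obstacle.} The main point needing care is a precise account of why $SE(3)$, rather than $E(3)$, is the correct claim. Under a reflection, $\mathbf{D}^{(l)}$ acquires a parity sign, while the ESM scalars cannot respond to it, as established in Proposition~\ref{prop:equivariance_prop}. I would not reprove this. I would only note that Steps 1 through 4 never use $\det\mathbf{R}=+1$ except to rely on that proposition for the overall scope of the result. The attention block itself is in fact $O(3)$-invariant, so GDA does not reduce the symmetry any further.
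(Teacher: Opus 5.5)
Your proof is correct. Its Step 2 is exactly the paper's whole argument: the paper notes that ESM embeddings are sequence-derived and hence invariant, deduces that \(\|h_j-h_i\|^2\), \(\mu_i\), \(\sigma_i^2\) and so \(\alpha_{ij}\) are invariant, and then asserts that invariant weights let the steerable features ``transform appropriately.'' For everything else it leans on GotenNet's equivariance.

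You make the argument self-contained, and the additions do real work.
Step 1 gives the invariance of \(\mathcal{N}(i)\), which the paper uses tacitly when it calls \(\mu_i\), \(\sigma_i^2\) and the softmax denominator invariant.
Your base case and layer induction matter because the scalars fed to GDA are never the raw ESM embeddings. \(h_i^{(0)}\) already contains the distance-dependent message \(\mathbf{m}_i\), and later layers recompute the statistics from the updated \(h_i \leftarrow h_i + \Delta h_i\). So their invariance comes from your induction, not from the sequence origin of the ESM features, and the paper's appeal to the latter does not cover this.
Steps 3--4 check the Wigner-\(D\) bookkeeping that the paper delegates to the backbone.
The paper's route buys brevity and isolates the one new component. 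Yours actually proves the claim for the full message-passing stack.

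One caution on your closing paragraph. Absorb the parity factor \((-1)^l\) into \(\mathbf{D}^{(l)}\) for improper rotations. Then every inner product and norm in Steps 3--4 stays invariant, and invariant scalars need not ``respond'' to anything, so your steps go through verbatim for \(\det\mathbf{R}=-1\). Deferring to Proposition~\ref{prop:equivariance_prop} is harmless for the \(SE(3)\) statement. But your own argument shows no obstruction from the ESM scalars, so do not present it as if it did.
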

\textit{Proof in Appendix~\ref{proof:gaussian_equiv}.}
\begin{remark}
The loss of reflection equivariance \(E(3) \rightarrow SE(3)\) occurs at the input level through ESM embeddings, not through the attention mechanism itself.
\end{remark}

\begin{remark}[Parameter and Computational Efficiency]
The Gaussian Dynamic Attention requires only $O(H)$ learnable parameters complexity, in contrast to the $O(d^2)$ for standard dot-product attention's key-query projection. Calculating neighborhood statistics incurs merely $O(|N(i)|d)$ operations per node, which is insignificant relative to the $O(|N(i)|d^2)$ for conventional dot-product attention.
This results in \(O(d)\)-fold reduction in computational complexity with negligible parameter overhead and a significant decrease in inference time, as shown in Appendix Table \ref{tab:runtime}.
\end{remark}

% \begin{proposition}[Expressiveness of Variance-Based Attention]
% \label{prop:expressiveness}
% Let $\mathcal{A}_{DP}$ denote dot-product attention and $\mathcal{A}_{GDA}$ denote Gaussian Dynamic Attention. For any neighborhood $\mathcal{N}(i)$ where features exhibit non-uniform variance across binding and non-binding regions, there exist attention patterns achievable by $\mathcal{A}_{GDA}$ that cannot be approximated by $\mathcal{A}_{DP}$ without additional depth.
% \end{proposition}

% \begin{proof}[Proof Sketch]
% Dot-product attention computes pairwise similarity: $\alpha_{ij} \propto \exp(h_i^T h_j)$, which is independent of neighborhood-level statistics. In contrast, GDA computes $\alpha_{ij} \propto \exp(-\|h_j - h_i\|^2 / \sigma_i^2)$, where $\sigma_i^2$ aggregates information from all neighbors. This neighborhood aggregation provides direct access to second-order statistics that dot-product attention can only approximate through multiple message-passing layers. Empirically, we observe that adding layers to dot-product models leads to oversmoothing before matching GDA's discrimination (Figure~\ref{fig:layer_effect}).
% \end{proof}

\subsection{Training Objective}
We formulate binding site prediction as a multi-task learning problem that jointly optimizes localization accuracy and geometric understanding of protein-ligand interactions.

For binding site identification, a node-level prediction task, we compute $\hat{y}_i = \text{Sigmoid}(\text{MLP}(h_i^{(L)}))$ and use Dice Loss $\mathcal{L}_{\text{Dice}}$~\citep{deeppocket, equipocket} to address inherent class imbalance, where $\hat{y}_i$ denotes the predicted binding probability after $L$ layers.

\textbf{Auxiliary Directional Loss.}
To enhance geometric understanding, we extract directional information from the learned equivariant features and supervise it with ground truth ligand directions. The $l=1$ steerable features $\Tilde{\mathbf{X}}_i^{(1)} \in \mathbb{R}^{3 \times h_d}$ inherently encode directional information. We extract predicted directions as: \(\hat{\mathbf{d}}_i = \frac{\Tilde{\mathbf{X}}_{ichannel}^{(1)}}{\|\ \Tilde{\mathbf{X}}_{ichannel}^{(1)}\|_2 + \epsilon}\). Here, $\bar{\mathbf{X}}_{ichannel}^{(1)} = \frac{1}{h_d}\sum_{k=1}^{h_d} \Tilde{\mathbf{X}}_{i,k}^{(1)} \in \mathbb{R}^3$ averages across feature channels to obtain a single direction vector. The ground truth direction $\mathbf{d}_i^{\text{true}}$ points from residue $i$ to the nearest ligand heavy atom: \(\mathbf{d}_i^{\text{true}} = \frac{\mathbf{p}_{\text{lig}}^* - \mathbf{p}_i}{\|\mathbf{p}_{\text{lig}}^* - \mathbf{p}_i\|_2}, \quad \text{where } \mathbf{p}_{\text{lig}}^* = \arg\min_{\mathbf{p} \in \mathcal{L}} \|\mathbf{p} - \mathbf{p}_i\|_2\). 
Here, $\mathcal{L}$ denotes the set of ligand atom positions. We compute directional loss \(\mathcal{L}_{\text{Dir}}\) using cosine similarity between true and predicted directions. Hence, the training objective of our GDEGAN becomes \(\mathcal{L} = \mathcal{L}_{\text{Dice}} + \mathcal{L}_{\text{Dir}}\).

\section{Experiments}
\label{experiments}

\subsection{Datasets and Baseline Methods Compared}
We utilize the datasets benchmark settings of EquiPocket \citep{equipocket} for LBS identification. 
\textbf{scPDB} \citep{scpdb} is the widely used dataset for training and validation, which contains the proteins' 3D structures and ligands. \textbf{PDBbind2020} \citep{pdbbind}, \textbf{COACH420} and \textbf{HOLO4K} \cite{p2rank} are three diverse datasets used to evaluate our method. The detailed discussion on datasets can be found in Appendix \ref{datasets}. We compare GDEGAN with several categories of methods, as seen in Table \ref{tab:main_results}, with further information provided in Appendix \ref{app:baseline_methods}.

\begin{table*}[t]
\centering
\caption{Experimental results of baseline models and our framework measured by DCC and DCA success rates$^a$. The table presents comparative results across three benchmark datasets. Bold values indicate the best performance in each metric.}
\label{tab:main_results}
\resizebox{\textwidth}{!}{%
\begin{threeparttable}
\small
\begin{tabular}{@{}lcccccccc@{}}
\toprule

\multirow{2}{*}{\textbf{Methods}} & \multirow{2}{*}{\shortstack{\textbf{Param} \\ \textbf{(M)}}}  & \multirow{2}{*}{\shortstack{\textbf{Failure} \\ \textbf{rate} $\downarrow$}}  & \multicolumn{2}{c}{\textbf{COACH420}} & \multicolumn{2}{c}{\textbf{HOLO4K$^k$}} & \multicolumn{2}{c}{\textbf{PDBbind2020}} \\
\cmidrule(lr){4-5} \cmidrule(lr){6-7} \cmidrule(lr){8-9}

 & & & {\bf DCC$\uparrow$} & {\bf DCA$\uparrow$} & {\bf DCC$\uparrow$} & {\bf DCA$\uparrow$} & {\bf DCC$\uparrow$} & {\bf DCA$\uparrow$} \\
\midrule
Fpocket$^b$ & $\backslash$ & \bf{0.000} & 0.228 & 0.444 & 0.192 & 0.457 & 0.253 & 0.371 \\
P2rank$^b$  & $\backslash$ & \bf{0.000} & 0.366 & 0.628 & 0.314 & 0.621 & 0.503 & 0.677 \\
DeepSite$^b$  & 1.00 & $\backslash$ & $\backslash$ & 0.564 & $\backslash$ & 0.456 & $\backslash$ & $\backslash$ \\
Kalasanty$^b$  & 70.6 & 0.120 & 0.335 & 0.636 & 0.244 & 0.515 & 0.416 & 0.625 \\
DeepSurf$^b$  & 33.1 & 0.054 & 0.386 & 0.658 & 0.289 & 0.635 & 0.510 & 0.708 \\
RecurPocket$^b$  & 21.2 & 0.075 & 0.354 & 0.593 & 0.277 & 0.616 & 0.492 & 0.663 \\
\midrule
GAT$^b$  & \bf{0.03} & 0.110 & 0.039(0.005) & 0.130(0.009) & 0.036(0.003) & 0.110(0.010) & 0.018(0.001) & 0.088(0.011) \\
GCN$^b$  & 0.06 & 0.163 & 0.049(0.001) & 0.139(0.010) & 0.044(0.003) & 0.174(0.003) & 0.018(0.001) & 0.070(0.002) \\
GCN2$^b$  & 0.11 & 0.466 & 0.042(0.098) & 0.131(0.017) & 0.051(0.004) & 0.163(0.008) & 0.023(0.007) & 0.089(0.013) \\
\midrule
SchNet$^b$ & 0.49 & 0.140 & 0.168(0.019) & 0.444(0.020) & 0.192(0.005) & 0.501(0.004) & 0.263(0.003) & 0.457(0.004) \\
Egnn$^b$  & 0.41 & 0.270 & 0.156(0.017) & 0.361(0.020) & 0.127(0.005) & 0.406(0.004) & 0.143(0.007) & 0.302(0.006) \\
EquiPocket$^b$ & 1.70 & 0.051 & 0.423(0.014) & 0.656(0.007) & 0.337(0.006) & 0.662(0.007) & 0.545(0.010) & 0.721(0.004) \\
\midrule
GotenNet & 2.20 & 0.049 & 0.464(0.007) & 0.624(0.014) & 0.454(0.001) & 0.691(0.005) & 0.553(0.008) & 0.705(0.007) \\
\midrule
\textbf{GDEGAN} & 1.90 & 0.032 & \bf{0.580(0.008)} & \bf{0.707(0.009)} & \bf{0.560(0.013)} & \bf{0.788(0.011)} & \bf{0.675(0.010)} & \bf{0.826(0.011)} \\
\bottomrule
\end{tabular}

\begin{tablenotes}
    \item[1] \footnotesize $^a$The standard deviation is indicated in brackets. $^b$ Results from the EquiPocket \citep{equipocket} paper. $^k$ holo4k contains multi chains and complex with multiple copies, presenting a strong distribution shift.
\end{tablenotes}
\end{threeparttable}
}
\end{table*}

%%%%%%%%%%%%%%%%%%%%%%% ABLAtion table ------------------------

\begin{table*}[htbp]
\centering
\caption{Ablation Study.}
\label{tab:ablation_results}
\resizebox{\textwidth}{!}{%
\begin{threeparttable}
\small  % Use \small for ICLR compliance

\begin{tabular}{@{}lccccccccc@{}}
\toprule

\multirow{2}{*}{\textbf{Methods}} & \multirow{2}{*}{\textbf{EQ$^c$}}  & \multirow{2}{*}{\textbf{ADL$^d$}} &\multirow{2}{*}{\textbf{ESM$^e$}} & \multicolumn{2}{c}{\textbf{COACH420}} & \multicolumn{2}{c}{\textbf{HOLO4K}} & \multicolumn{2}{c}{\textbf{PDBbind2020}} \\
\cmidrule(lr){5-6} \cmidrule(lr){7-8} \cmidrule(lr){9-10}

 & & & & {\bf DCC$\uparrow$} & {\bf DCA$\uparrow$} & {\bf DCC$\uparrow$} & {\bf DCA$\uparrow$} & {\bf DCC$\uparrow$} & {\bf DCA$\uparrow$} \\
\midrule

GotenNet & \(E(3)\) & No & No & 0.454(0.007) & 0.624(0.014) & 0.464(0.001) & 0.691(0.005) & 0.553(0.008) & 0.705(0.007) \\
GotenNet+ADL & \(E(3)\) & Yes & No & 0.485(0.004) & 0.642(0.011) & 0.468(0.004) & 0.732(0.004) & 0.592(0.010) & 0.748(0.003) \\
GotenNet+ESM & \(SE(3)\) & No & Yes & 0.543(0.008) & 0.693(0.006) & 0.520(0.011) & 0.753(0.004) & 0.637(0.005) & 0.760(0.004) \\
GotenNet(full) & \(SE(3)\) & Yes & Yes & 0.556(0.002) & 0.703(0.005) & 0.529(0.011) & 0.749(0.006) & 0.649(0.005) & 0.801(0.014) \\
\midrule
\textbf{GDEGAN+ESM} & \(SE(3)\) & No & Yes & 0.572(0.001) & 0.702(0.002) & 0.532(0.010) & 0.769(0.006) & 0.652(0.010) & 0.810(0.011) \\
\textbf{GDEGAN(full)} & \(SE(3)\) & Yes & Yes & \bf{0.580(0.008)} & \bf{0.707(0.009)} & \bf{0.560(0.013)} & \bf{0.788(0.011)} & \bf{0.675(0.010)} & \bf{0.826(0.011)} \\
\bottomrule
\end{tabular}
\begin{tablenotes}
    \item[1] \footnotesize $^c$Equivariance of the Model. $^d$ Auxiliary Directional Loss. $^e$ Node Features generated through ESM-2 \citep{esm2}. GotenNet computes dot-product attention while GDEGAN computes gaussian dynamic attention.
\end{tablenotes}
\end{threeparttable}

} % End of \resizebox
\end{table*}

\subsection{Evaluation Metrics}
We evaluate using standard metrics: \textbf{DCC} (Distance from Center to Center), \textbf{DCA} (Distance to Closest Atom), and \textbf{Failure rate} for LBS identification, detailed in Appendix \ref{mat:eval_metrics}. Predictions are successful when DCC or DCA $< 4$\AA. During inference on novel proteins where the number of binding pockets are unknown, we employ mean-shift clustering \citep{mean_shift} on high-scoring residues ($\hat{y}_i > \tau$) following \citep{p2rank} to automatically identify multiple binding pockets.

\subsection{Implementation Details}
We use 4 layers with hidden dimension 128, $L_{max}=2$ for steerable features, and 8 attention heads. Models are trained for 100 epochs with AdamW optimizer. More details are provided in Appendix~\ref{training}.

\subsection{Results and Attention Patters Analysis}
The experimental results shown in Table \ref{tab:main_results} indicate substantial improvements across all benchmarks.  GDEGAN attains the highest DCC success rate across all test datasets, with significant enhancements of 37.1\%, 66.17\%, and 23.8\% over EquiPocket's DCC success rate for COACH420, HOLO4k, and PDBbind2020, respectively. For DCA metrics, we achieve enhancements of 7.7\%, 19.0\%, and 14.5\% compared Equipocket's DCA. GDEGAN decreases the failure rate to 3.2\%, in contrast to 5.1\% for EquiPocket and 4.9\% for GotenNet. Figure \ref{fig:result_proteins} illustrates the visualization of both predictions: the centroid and probable binding site candidates from our model.  The comparison of inference speeds for our method is presented in Appendix \ref{app:inference_time}.
\begin{figure}[htbp]
    \centering
    \includegraphics[width=\columnwidth]{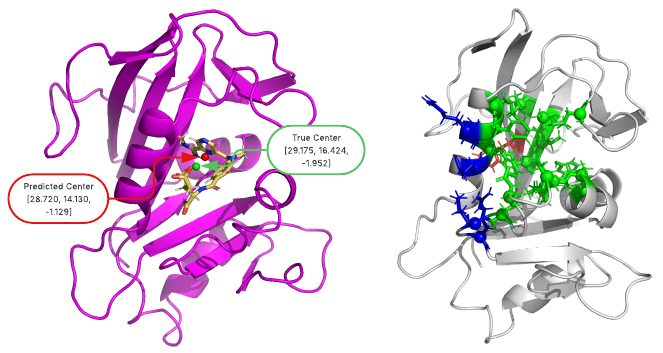}
    \caption{\textbf{Visualization of Protein `PDB:1u72(A)'.} 
    \textbf{Left:} Model prediction (red) vs true center (green) with coordinates.
    \textbf{Right:} Predicted residues: True Positive (green), False Positive (red), False Negative (blue).}
    \label{fig:result_proteins}
\end{figure}

We visualize learned attention patterns for both Gaussian dynamic attention (GDEGAN) and dot product attention (GotenNet) as shown in Figure \ref{fig:attention_fig}. 
The binding site attention sub-matrix on the left of GDEGAN shows higher values on binding site residues (darker red) and relevant neighbors (dark yellow), suggesting strong local clustering, which demonstrates the adaptive nature of the method.
We quantitatively validate this adaptive behavior by our variance analysis (Appendix \ref{validation_corelation_analysis}), where we demonstrate that binding sites exhibit significantly higher learned variance ($\sigma_i^2 = 1.41 \pm 0.52$) compared to non-binding regions ($\sigma_i^2 = 0.75 \pm 0.35$, $p < 0.001$).
% This statistical validation directly supports our approach, aligning with the claim that GDEGAN's Gaussian dynamic attention impose localization by automatically nullifying distant interactions. This crucial localization allows the adaptive nature of GDA to effectively discriminate between binding pockets and surface regions by concentrating the model's representational capability on statistically relevant neighborhoods.
Meanwhile, the attention sub-matrix on the left of GotenNet shows a typical pattern of dot product attention capturing binding sites, but with less concentration on relevant neighbors. The attention values with reduced peak magnitudes in GDEGAN indicate optimal allocation of attention instead of indiscriminate distribution.

\begin{figure*}[t]
    \centering
    \includegraphics[width=\textwidth]{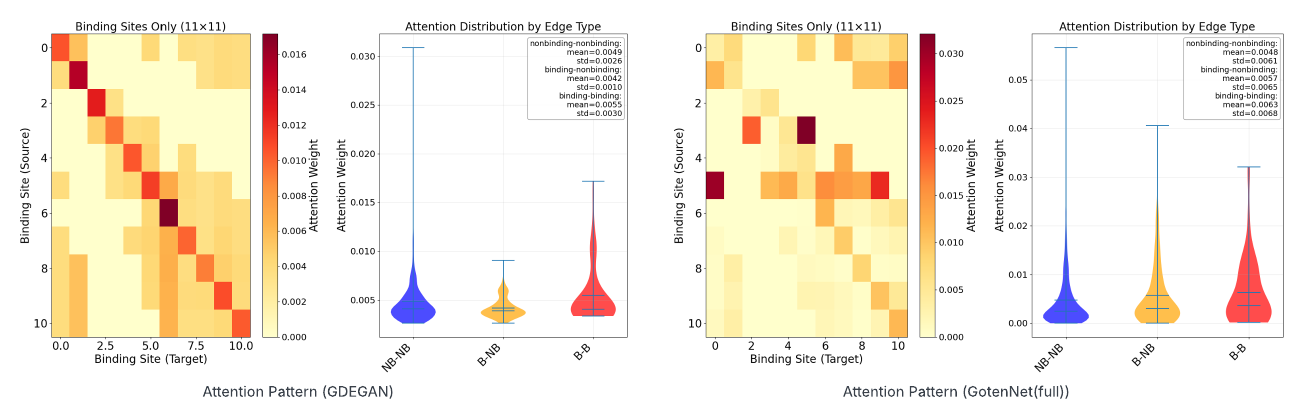}
    \caption{\textbf{Attention Patters Visualizations of Protein `PDB:3c2f(A)'.}
    \textbf{Left:} On the left we show the attention patterns of GDEGAN, and on the \textbf{Right:} attention patterns of GotenNet(full).}
    \label{fig:attention_fig}
\end{figure*}

\begin{figure*}[t]
    \centering
    \begin{subfigure}[b]{0.32\textwidth}
        \centering
        \includegraphics[width=\textwidth]{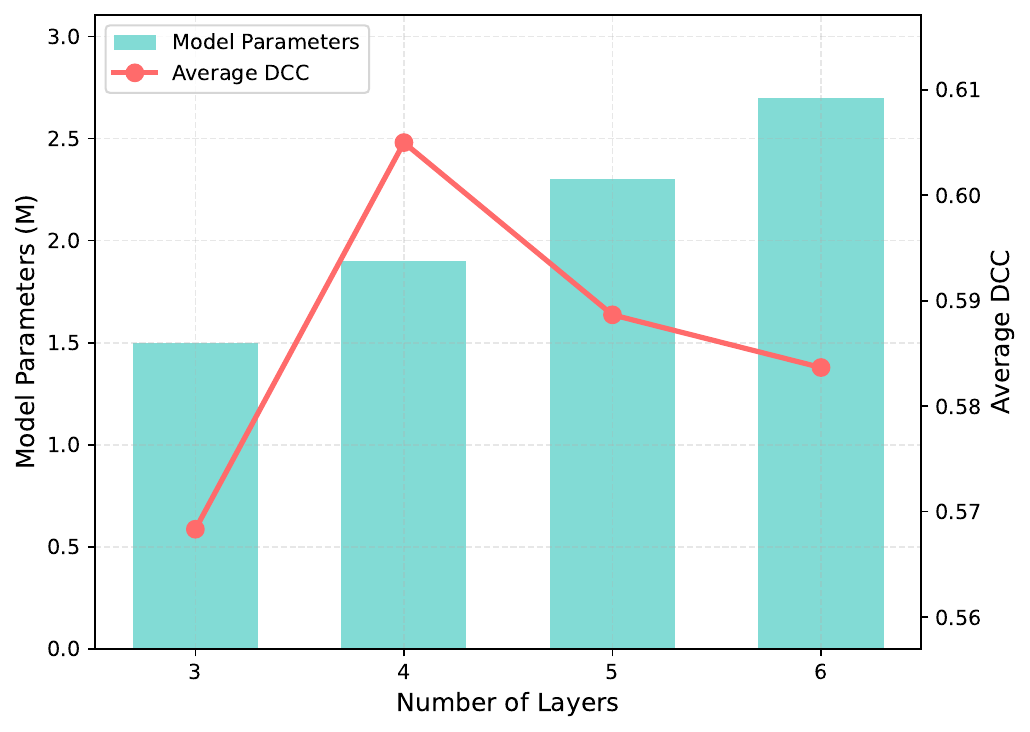}
        \caption{Model Depth vs. Performance}
        \label{fig:layer_effect}
    \end{subfigure}
    \hfill
    \begin{subfigure}[b]{0.32\textwidth}
        \centering
        \includegraphics[width=\textwidth]{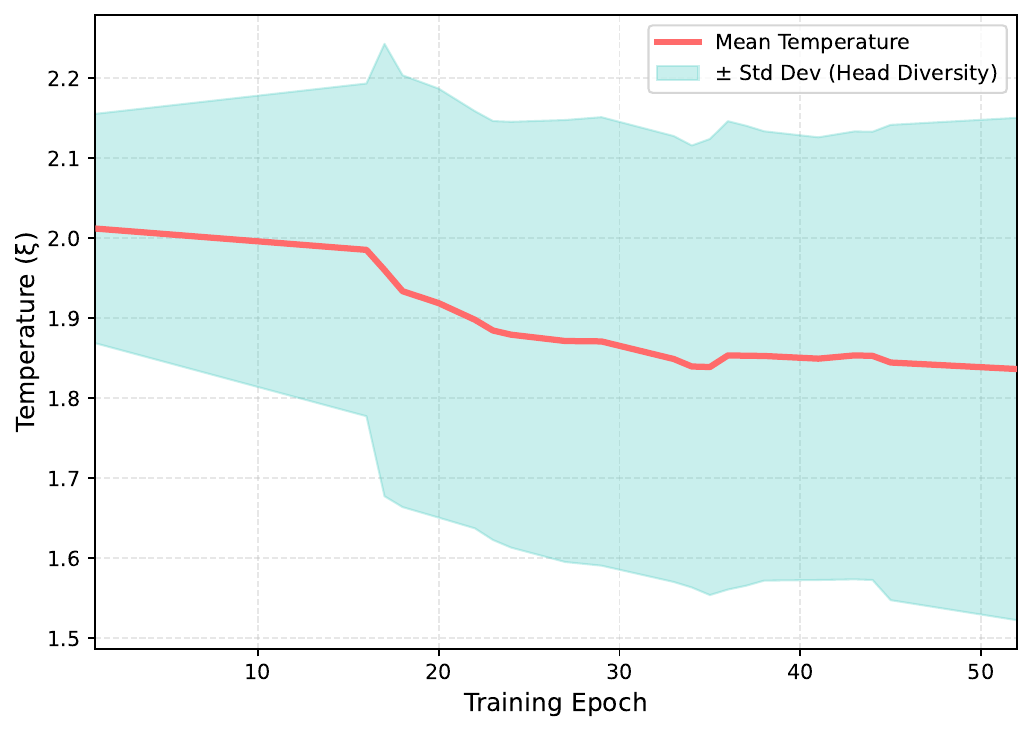}
        \caption{Temperature Evolution}
        \label{fig:temp_evolution}
    \end{subfigure}
    \hfill
    \begin{subfigure}[b]{0.32\textwidth}
        \centering
        \includegraphics[width=\textwidth]{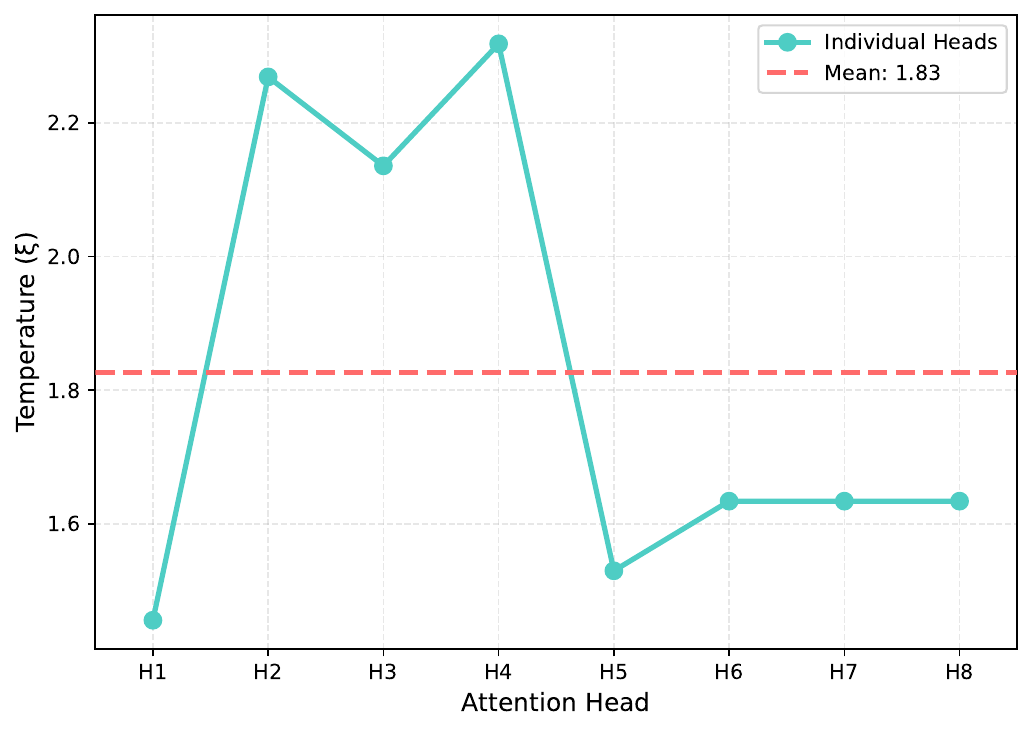}
        \caption{Final Temperature Distribution}
        \label{fig:temp_dist}
    \end{subfigure}
    \caption{\textbf{Ablation analysis.} \textbf{(a)} Model depth analysis showing performance peaks at 4 layers with increased parameters leading to oversmoothing. \textbf{(b)} Temperature evolution across training epochs averaged across multiple runs with mean and standard deviation. \textbf{(c)} Learned temperatures distribution across 8 attention heads (best model).}
    \label{fig:ablation_analysis}
\end{figure*}

\subsection{Ablation Study}
Table \ref{tab:ablation_results} presents the component-wise ablation results. Using ESM embeddings (GotenNet+ESM) instead of atomic numbers embeddings (GotenNet) improves the results by 15.61\% DCC and 9.27\% DCA averaged across datasets, indicating evolutionary information helps capture better binding regions than a purely geometric approach. Gaussian attention (GDEGAN+ESM) improves the results over dot-product attention (GotenNet+ESM) by 3.33\% DCC and 3.32\% DCA and both the techniques (full) with auxiliary directional loss as directional supervision further improves the results by an average of 2\% DCC and 3.5\% DCA respectively averaged across datasets.
This improvement demonstrates the synergy between GDA and steerable features: high-degree representations ($l \in \{1,2\}$) encode \textit{how} geometric information transforms under rotations, while GDA determines \textit{which} neighbors are relevant through variance-adaptive weighting. Neither alone captures binding site heterogeneity as effectively as their combination.
Critically, on the more structurally diverse HOLO4K dataset, GDEGAN(full) achieves a 2.4\% DCC improvement over GotenNet(full), compared to only 1.5\% on the more homogeneous COACH420 dataset, demonstrating that statistical adaptation provides greater benefits as structural heterogeneity increases. Notably, ESM features provide the largest enhancement, Gaussian attention's contribution grows with data complexity, validating our hypothesis that adaptive statistical kernels better handle protein heterogeneity than uniform similarity measures.
Figure~\ref{fig:layer_effect} shows performance peaks at 4 layers and degrades with depth.
Our GDA explicitly computes $\sigma_i^2$ at each layer, providing direct inductive bias.
Furthermore, the learnable temperatures distribution during training for multiple runs (mean: 2.03$\rightarrow$1.826, Figure~\ref{fig:temp_evolution} with heads specializing into low-$\xi$ ($\approx$1.4561, selective) and high-$\xi$ ($\approx$2.3181, inclusive) groups (Figure~\ref{fig:temp_dist}), increasing diversity 5.6$\times$ (std: 0.059$\rightarrow$0.330). This multi-scale specialization, enables adaptive pattern recognition for diverse binding site geometries.

\section{Conclusion and Limitations}
We introduced Gaussian Dynamic Attention (GDA), a novel attention mechanism that exploits local feature variance to identify protein-ligand binding sites. Unlike standard dot-product attention that applies uniform similarity metrics, GDA dynamically computes neighborhood statistics, enabling context-aware attention that adapts to the inherent heterogeneity of protein surfaces. Our method achieves 37-66\% improvement in DCC and 7-19\% in DCA over state-of-the-art methods, while being 19.5$\times$ faster than EquiPocket \citep{equipocket} through residue-level processing.

Our method is proposed to find the binding center by predicting probable binding site candidates instead of performing docking, a logical progression is to utilize our predictions to limit the docking search space. Even though the training data is limited in size and data points have well-defined pockets with a single ligand, our approach can generalize better. Our method would benefit more from training on more diverse data points with multi-ligand interacting pockets. Given that GDEGAN predicts binding sites,  future research should methodically assess whether our statistical attention scores indicate ligand binding locations and predict binding affinity based on local feature coherence.

\section*{Impact Statement}

This work aims to advance computational drug discovery by improving the identification of protein-ligand binding sites. Accurate binding site prediction can accelerate therapeutic development for diseases with unmet medical needs, potentially reducing drug discovery costs and timelines. Our method processes only protein structures and does not involve human subjects, personal data, or dual-use concerns. We anticipate the primary impact to be beneficial, enabling researchers to more efficiently identify druggable targets.

%%%%%%%%%%%%%%%%%%%%%%%%%%%%%%%%%%%%%%%%%%%%%%%%%%%%%%%%%%%%%%%%%%%%%%%%%%%%%%%
%%%%%%%%%%%%%%%%%%%%%%%%%%%%%%%%%%%%%%%%%%%%%%%%%%%%%%%%%%%%%%%%%%%%%%%%%%%%%%%
% BIBLIOGRAPHY
%%%%%%%%%%%%%%%%%%%%%%%%%%%%%%%%%%%%%%%%%%%%%%%%%%%%%%%%%%%%%%%%%%%%%%%%%%%%%%%
%%%%%%%%%%%%%%%%%%%%%%%%%%%%%%%%%%%%%%%%%%%%%%%%%%%%%%%%%%%%%%%%%%%%%%%%%%%%%%%

\bibliography{references}
\bibliographystyle{icml2026}

%%%%%%%%%%%%%%%%%%%%%%%%%%%%%%%%%%%%%%%%%%%%%%%%%%%%%%%%%%%%%%%%%%%%%%%%%%%%%%%
%%%%%%%%%%%%%%%%%%%%%%%%%%%%%%%%%%%%%%%%%%%%%%%%%%%%%%%%%%%%%%%%%%%%%%%%%%%%%%%
% APPENDIX
%%%%%%%%%%%%%%%%%%%%%%%%%%%%%%%%%%%%%%%%%%%%%%%%%%%%%%%%%%%%%%%%%%%%%%%%%%%%%%%
%%%%%%%%%%%%%%%%%%%%%%%%%%%%%%%%%%%%%%%%%%%%%%%%%%%%%%%%%%%%%%%%%%%%%%%%%%%%%%%
\newpage
\appendix
\onecolumn

\section{Geometric Graph Formulation and Binding Site Representations}
\subsection{Representation of Proteins}
\label{prtein_rep}
The 3D structure of a protein is defined by the spatial coordinates of atoms associated with every amino acid, organized according to its amino acid sequence.
For computational efficiency and biological relevance in detecting probable binding site candidates, we represent each amino acid residue by its $C_\alpha$ atom position $\mathbf{p}_i \in \mathbb{R}^3$, which provides a stable backbone reference point consistent across all amino acid types.
We formalize a protein structure as a geometric graph \(\mathcal{G} = (\mathcal{V}, \mathcal{E})\) that captures both topological and spatial information necessary for binding site identification. Nodes in the geometric graph $\mathcal{V} = \{(\mathbf{p}_i, h_i)\}_{i=1}^N$ represent residue-feature pairs. The edges are established from each residue $i$ to all residues $j$ within a cutoff radius: $\mathcal{E} = \{(i,j) : \|\mathbf{p}_i - \mathbf{p}_j\| < r_c, i \neq j\}$, where we set $r_c = 10$ \AA~based on typical interaction distances in protein binding sites following \citep{equipocket}.
Rather than encoding raw physicochemical properties, we leverage learned representations that capture both evolutionary conservation and structural context. Specifically, for each residue $i$, we extract a feature vector $h_i \in \mathbb{R}^{d}$ from the pretrained ESM-2 \citep{esm2} model, which encodes evolutionary information and sequence context.

\subsection{Binding Site Definition and Representation.}
\label{binding_sites}
Protein binding sites are the critical regions where ligands, or other molecules interact with proteins in the biological process. We define binding sites based on proximity to known ligand positions from crystal structures, surrounded by the atoms of the protein.  A residue is classified as belonging to the binding site when any of its constituent atoms lies within a threshold distance $d_{bind}$ from any ligand atom. Formally, a residue $i$ is considered part of a binding site as:

\begin{equation}
y_i = \begin{cases}
1 & \text{if } \min_{a \in \mathcal{A}_i, b \in \mathcal{L}} \|\mathbf{p}_a - \mathbf{p}_b\| < d_{bind} \\
0 & \text{otherwise}
\end{cases}
\end{equation}
where $\mathcal{A}_i$ represents all atoms of residue $i$, $\mathcal{L}$ represents all ligand atoms. $d_{bind}$ represents the binding distance threshold taken as \(4\)\AA~ following \citep{equipocket, p2rank, deepsite}.

\section{Equivariance and Invariance.}
\label{app:quiv_inv}

In 3D geometric learning, the symmetry of physical laws necessitates that models adhere to the Euclidean group E(3), which includes rotations, reflections, and translations.
A function \(f\) is E(3)-equivariant if for rotation/reflection \(\mathbf{R} \in O(3)\) and translation \(\mathbf{t} \in \mathbb{R}^3\), it satisfies \(f(g \cdot \mathbf{P}, h) = g \cdot f(\mathbf{P}, h)\), where \(g \cdot \mathbf{P} = \mathbf{RP+t}\) denotes the group action on positions \(\mathbf{P}\), and invariant features \(h\). Invariant functions satisfy: \(f(g \cdot \mathbf{P}, h) = f(\mathbf{P}, h)\), producing unchanged outputs for scalar quantities.

\section{Proofs}

\subsection{Proof of Proposition~\ref{prop:equivariance_prop}}
\label{proof:eq_proof}

\begin{proof}
Considering the GDEGAN architecture with ESM embeddings as node features. We show that the network maintains SE(3) but not E(3) equivariance.

\textbf{ESM embeddings are invariant scalars.}
ESM embeddings $h_i \in \mathbb{R}^{n_d}$ encode amino acid sequence information and evolutionary patterns. These are scalar features that do not transform under any spatial transformation. For any transformation $g = (\mathbf{R}, \mathbf{t})$ where $\mathbf{R} \in SO(3)$ (rotation) and $\mathbf{t} \in \mathbb{R}^3$ (translation):
\begin{equation}
    h_i' = h_i \quad \forall g \in SE(3)
\end{equation}

\textbf{Loss of reflection equivariance.}
Consider a chiral molecule $\mathcal{M}$ and its mirror image $\mathcal{M}'$ obtained by reflection $P$. Both have identical ESM embeddings: $h_i = h'_i$ (same amino acid sequence) and identical pairwise distances: $\|\mathbf{x}_i - \mathbf{x}_j\| = \|\mathbf{x}'_i - \mathbf{x}'_j\|$, but different chirality. Since the network cannot distinguish between $\mathcal{M}$ and $\mathcal{M}'$ based on invariant features alone, it cannot be equivariant under reflections.

% \textbf{Biological relevance:} This reduction is acceptable for protein binding tasks since biological molecules have defined chirality (L-amino acids), making reflection equivariance unnecessary.
\end{proof}

\subsection{Proof of Gaussian Dynamic Attention Equivariance ~\ref{prop:gaussian_equivariance}}
\label{proof:gaussian_equiv}

Here we prove that our Gaussian Dynamic Attention mechanism maintains SE(3) equivariance:

\subsubsection{Gaussian Attention Mechanism}
Our Gaussian attention computes attention weights as:
\begin{equation}
    d_{ij}^{\text{scaled}} = \frac{h_j - h_i}{\sqrt{\sigma_i^2 + \epsilon}}
\end{equation}
\begin{equation}
\alpha_{ij} = \frac{\exp\left(-\frac{(\|d_{ij}^{\text{scaled}}\|_2)^2}{2\xi}\right)}{\sum_{k \in \mathcal{N}(i)} \exp\left(-\frac{(\|d_{ik}^{\text{scaled}}\|_2)^2}{2\xi}\right)}
\end{equation}
where $h_i, h_j \in \mathbb{R}^{n_d}$ are invariant scalar features from ESM embeddings, $\sigma_i^2$ is the neighborhood variance, and $\xi$ is the learnable temperature.

\subsubsection{Proof of SE(3) Equivariance}

\begin{proof}
\textbf{Invariance of scalar features.}
The ESM embeddings $h_i$ are invariant under SE(3) transformations, because they encode sequential and evolutionary information, not geometric co-ordinates:
\begin{equation}
    g(h_i) = h_i
\end{equation}

\textbf{Invariance of attention weights.}
Since $h_i$ and $h_j$ are invariant:
\begin{equation}
    \|g(h_j) - g(h_i)\|^2 = \|h_j - h_i\|^2
\end{equation}

The neighborhood variance $\sigma_i^2$ computed from invariant features is also invariant:
\begin{equation}
    \sigma_i^2 = \frac{1}{|\mathcal{N}(i)|} \sum_{k \in \mathcal{N}(i)} \|h_k - \mu_i\|^2
\end{equation}
where $\mu_i$ is the neighborhood mean. Under $g \in SE(3)$, both remain unchanged.

Therefore:
\begin{equation}
    \alpha_{ij}^g = \exp\left(-\frac{(\|d_{ij}^{\text{scaled}}\|_2)^2}{2\xi}\right) = \alpha_{ij}
\end{equation}

Hence, Gaussian Dynamic Attention mechanism preserves SE(3) equivariance by maintaining invariant attention weights while allowing equivariant features to transform appropriately under rotations. The scalar nature of ESM embeddings ensures that reflection equivariance is not required, reducing $E(3)$ to $SE(3)$ as stated in Proposition \ref{prop:equivariance_prop}.
\end{proof}

\section{Experiments}
\subsection{Datasets}
\label{datasets}
\textbf{scPDB} \citep{scpdb} comprises 17,594 protein-ligand complex structures from the 2017 release, representing 4,782 unique proteins and 6,326 ligands. 
We used the most frequently used dataset for LBS identification \citep{kal, puresnetv2} for training and validation, with a split of \(90:10\).
Final dataset was preprocessed using the steps described in EquiPocket \citep{equipocket}.
\textbf{PDBbind2020} \cite{pdbbind} contains experimentally determined binding affinity data paired with structural information. We utilize the refined subset as per \citep{equipocket}, consisting of 5,316 complexes selected for structural quality from the larger general set of 14,127 complexes. The refined set enforces strict quality criteria, including resolution better than 2.5Å and complete ligand electron density.
\textbf{COACH420 and HOLO4K} serve as independent test sets following \cite{p2rank}. COACH420 contains 420 protein-ligand complexes with diverse binding site architectures, while HOLO4K comprises 4,288 structures. Both datasets use the \(MLIG\) subsets following \citep{deeppocket, deepsite} containing biologically relevant ligands as defined by the original curation. Notably, HOLO4K presents significant distribution shift challenges as it contains numerous multi-chain assemblies and oligomeric proteins absent from typical training sets. We have split the HOLO4K dataset into per-chain components and aggregated the predictions in our evaluated results. For all datasets, we exclude solvent molecules and apply standard preprocessing, such as removing hydrogen atoms. Structures with missing coordinates or ambiguous ligand positions are filtered during preprocessing using rDkit \citep{rdkit}.

\subsection{Baseline Methods Compared}
\label{app:baseline_methods}
We compare GDEGAN with different categories of methods proposed for LBS identification. \textit{Traditional Machine Learning-based:} Fpocket \citep{fpocket}, and P2rank \citep{p2rank}. \textit{CNN-based:} DeepSite \citep{deeppocket}, Kalasanty \citep{kal}, and RecurPocket \citep{recurpocket}. \textit{Topological Graph-based:} GAT \cite{gat}, GCN \cite{gcn}, and GCN2 \citep{gcn2}. \textit{Spatial Graph-based:} SchNet \citep{schnet}, EGNN \citep{egnn}, and Equipocket \citep{equipocket}. \textit{High-degree steerable method:} GotenNet \citep{gotennet}.

\subsection{Evaluation Metrics}
\label{mat:eval_metrics}
\textbf{DCC (Distance from Center to Center).} For each predicted binding site center $\hat{\mathbf{p}}_i$ and true binding site center $\mathbf{p}_j$, DCC measures the Euclidean distance between centers:
\begin{equation}
\text{DCC} = \|\hat{\mathbf{p}}_i - \mathbf{p}_{ligand}\|_2
\end{equation}
where $\hat{\mathbf{p}}_i \in \mathbb{R}^3$ represents the $i$-th predicted center and $\mathbf{p}_j \in \mathbb{R}^3$ the $j$-th ground truth center.

\textbf{DCA(Distance to Closest Atom).} This metric evaluates whether predictions are within the actual binding region by measuring the minimum Euclidean distance from a predicted center to any ligand atom:
\begin{equation}
\text{DCA} = \min_{b \in \mathcal{L}} \|\hat{\mathbf{p}}_i - \mathbf{p}_b\|_2
\end{equation}
where $\mathcal{L}$ represents all ligand atoms.

For both metrics predictions are considered successful if they are within a standard threshold \(\tau\) , which in this case we have taken as \(4\)\AA~following \cite{deeppocket, fpocket, deepsurf, equipocket}.
\begin{equation}
\text{Success Rate}_{\text{DCC/DCA}} = \frac{|\{\text{Predicted sites} \mid \text{DCC/DCA} < \tau\}|}{|\{\text{True sites}\}|}
\end{equation}
\begin{equation}
\text{Failure Rate} = \frac{|\{\text{Proteins} \mid |\text{predicted centers}| = 0\}|}{|\{\text{Proteins}\}|}
\end{equation}
where $|\cdot|$ denotes set cardinality and $\tau = 4$\AA~is the standard threshold for successful prediction.

We have used \textbf{DCC/DCA success rate} and \textbf{Failure rate} as the evaluation metrics to compare from the sate-of-the-art methods. 

\subsection{Computational Efficiency Analysis}
\label{app:inference_time}

We evaluate the inference speed on \(100\) proteins for each model and present the duration in seconds (s) in Table \ref{tab:runtime}.  GDEGAN demonstrates substantial enhancements, with inference time about \(1.9\) seconds, in contrast to GotenNet's \(4.12\) seconds and Equipocket's \(37\) seconds.

\begin{table}[htbp]
\centering
\caption{Inference time comparison across methods.}
\label{tab:runtime}
\scriptsize
\begin{tabular}{@{}lccc@{}}
\toprule
\textbf{Method} & \textbf{Time (s/100 proteins)} & \textbf{Speedup} & \textbf{Type} \\
\midrule
\textbf{GDEGAN (Ours)} & \textbf{1.90 } & \textbf{19.5×} & Reside Level Nodes \\
\midrule
GotenNet & 4.12 & 9.0× & Reside Level Nodes  \\
\midrule
EquiPocket$^p$ & 37.00 & 1.0× & Atom Level Nodes  \\
\midrule
Fpocket$^p$ & 23.00 & 1.6× & Geometry Based \\
\midrule
Kalasanty$^p$ & 86.00 & 0.4× & 3D-CNN Based \\
DeepSurf$^p$ & 641.00 & 0.06× & 3D-CNN Based \\
\bottomrule
\end{tabular}

\footnotesize $^p$ Results from the EquiPocket \citep{equipocket} paper.

\end{table}

\section{Training Stability Analysis}
\label{training_stability}

We have conducted a comprehensive stability analysis comparing GDEGAN and GotelNet(full) across training epochs until training converges from early-stopping criteria. The empirical evidence from the training and validation loss shows that $(\sigma_i)^2$ term in our GDEGAN not only maintains the stability but also improves training convergence compared to GotenNet(full). GDEGAN demonstrates \textbf{5.3\% lower training loss variance (.0179 vs .0189)} compared to GotenNet(full), indicating more stable gradient flow despite $(\sigma_i)^2$ computation. Throughout training we observed monotonic convergence without divergence or oscillations. Figure \ref{fig:loss_convergence_fig} shows training and validation plots until 50 epochs consistent with our early stopping criterion that selects the best model. Interestingly, while GDEGAN slightly higher validation loss, it achieves better test performance as indicated in Table \ref{tab:ablation_results}. This validation-test gap indicates that our method learns the right features for binding site localization rather than overfitting to loss minimization. 

\begin{figure}[htbp]
    \centering
    % \vspace{-30pt}
    \includegraphics[width=\textwidth]{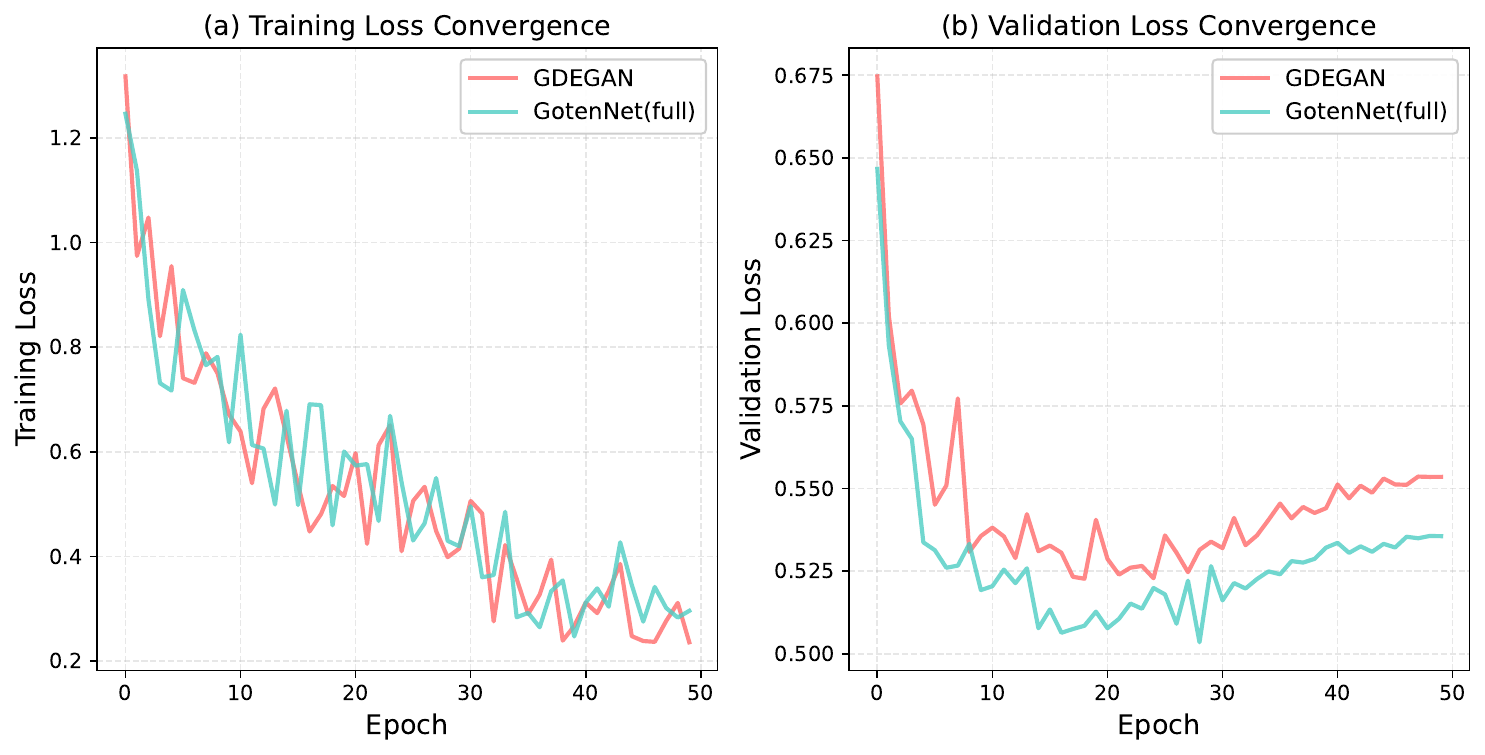}
    \caption{Training and Validation loss curve.
    \textbf{Left:} On the left (a) we show training loss convergence, and on the \textbf{Right:} (b) validation loss convergence.}
    \label{fig:loss_convergence_fig}
    % \vspace{-20pt}
\end{figure}

\section{Dependency between Variance and Binding Sites}
\label{validation_corelation_analysis}
We hypothesize that the binding site exhibits high variance in local structure. To validate our hypothesis we conducted a comprehensive dependency analysis between feature variance and binding sites on the validation set. This analysis directly evaluates whether the learned neighborhood variance $(\sigma_i^2)$ computed by our Gaussian Dynamic Attention mechanism correlates with ground-truth binding site locations. We extracted the learned neighborhood variance $\sigma_i^2$ from the final Gaussian dynamic attention layer of our trained GDEGAN model on the validation set, comprising 205,791 residues (11,107 binding sites, 194,684 non-binding residues). The validation set represents 10\% of the scPDB training data, ensuring independent evaluation. We conducted multiple statistical tests to comprehensively evaluate the variance-binding relationship.

\begin{table}[htbp]
\centering
\caption{Statistical Analysis of Variance-Binding Site Correlation}
\label{tab:variance_correlation}
\begin{tabular}{@{}lll@{}}
\toprule
\textbf{Metric} & \textbf{Value} & \textbf{p-value} \\
\midrule
\multicolumn{3}{l}{\textit{Correlation Coefficients}} \\
\quad Pearson correlation ($r$) & 0.3865 & $<$0.001 \\
\quad Spearman correlation ($\rho$) & 0.2900 & $<$0.001 \\
\quad Point-biserial correlation & 0.3865 & $<$0.001 \\
\midrule
\multicolumn{3}{l}{\textit{Hypothesis Tests}} \\
\quad Mann-Whitney U statistic & 1.88 $\times$ 10$^9$ & $<$0.001 \\
\quad Independent t-test ($t$) & 190.10 & $<$0.001 \\
\midrule
\multicolumn{3}{l}{\textit{Effect Size}} \\
\quad Cohen's $d$ & 1.8545 & -- \\
\midrule
\multicolumn{3}{l}{\textit{Descriptive Statistics}} \\
\quad Binding residues (mean $\pm$ SD) & $1.412 \pm 0.522$ & -- \\
\quad Non-binding residues (mean $\pm$ SD) & $0.747 \pm 0.347$ & -- \\
\quad Ratio (binding/non-binding) & 1.89$\times$ & -- \\
\bottomrule
\end{tabular}
\end{table}

Multiple correlation tests presented in Table~\ref{tab:variance_correlation} and a box-plot in Figure \ref{fig:variance_boxplot} confirm a strong positive relationship between variance and binding sites: Pearson correlation $r = 0.3865$ ($p < 0.001$) for linear association, Spearman $\rho = 0.2900$ ($p < 0.001$) for monotonic relationships, and point-biserial $r = 0.3865$ ($p < 0.001$) for continuous-binary pairs. The very large effect size (Cohen's $d = 1.8545$, far exceeding the conventional threshold of $d = 0.8$ for ``large") indicates this difference is not only statistically significant but also practically meaningful, demonstrating that variance provides strong discriminative power for binding site identification.

\begin{figure}[t]
\centering
\includegraphics[width=0.7\textwidth]{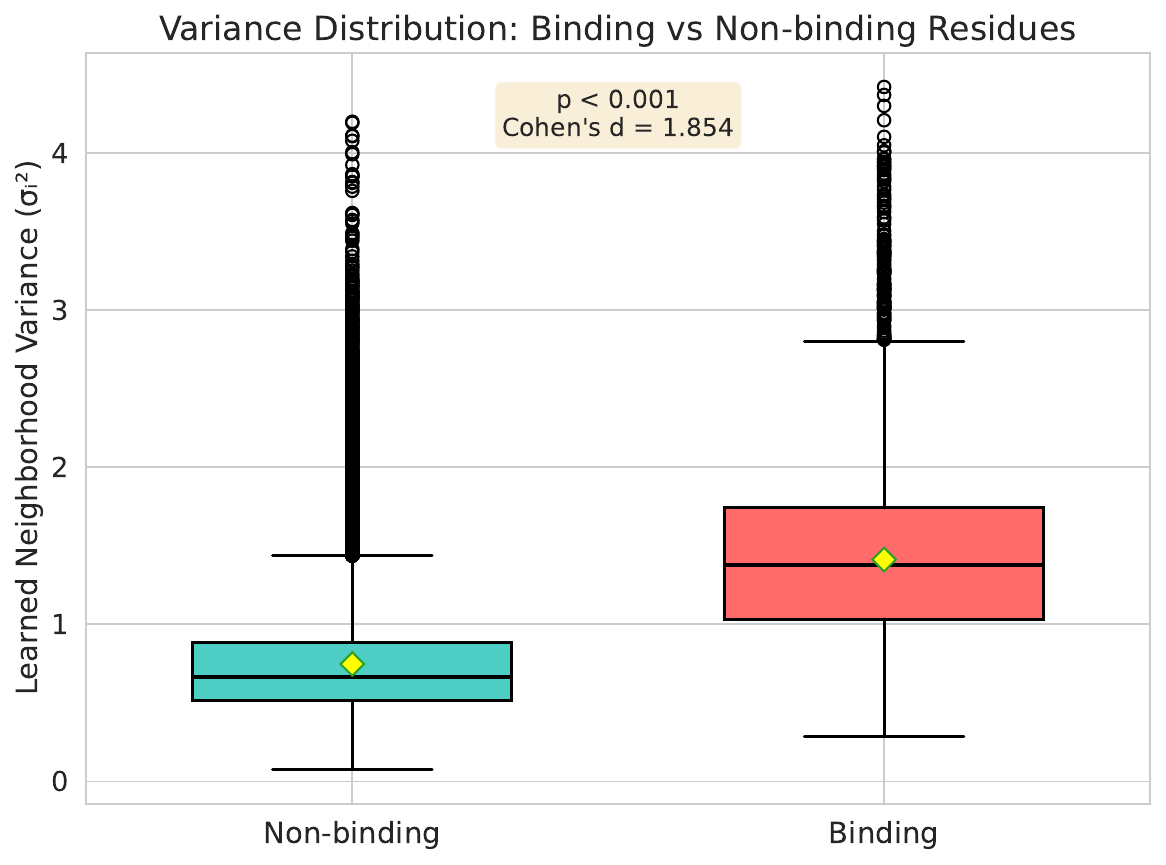}
\caption{\textbf{Variance Distribution: Binding vs Non-binding Residues.} Box plot comparing learned neighborhood variance ($\sigma_i^2$) between binding (red, $n = 11{,}107$) and non-binding (blue, $n = 194{,}684$) residues on the validation set.}
\label{fig:variance_boxplot}
\end{figure}

\section{Training and Hyper-Parameters Selection}
\label{training}
We used \(4\) layers of GDEGAN with hidden dimensions as \(128\) throughout the model, \(L_{max} = 2\) for steerable features, attention heads as \(8\), and edge spatial cutoff \(r_c\) is set to \(10 \text{Å}\). 
We trained our model using the AdamW Optimizer \citep{adamw} for \(100\) epochs, selecting the best checkpoints based on the validation set. 
LayerNorm, TensorLayerNorm and Dropouts \citep{gotennet} were applied in each layer with SiLU activation. 
The learning rate was initially set to \(0.0005\) with Cosine Scheduler and weight decay with value \(0.05\). 
We trained our model on NVIDIA H\(100\) NVL GPU with a batch size of \(16\). 
Hyperparameters for training as outlined in Table \ref{tab:hyperparameters}, are selected based on the validation data, which comprises 10\% of the training dataset. These hyperparameters can be employed to ensure reproducibility. \textbf{We will release the full code based on the acceptance of the work.}

\begin{table}[htbp]
\centering
\caption{Hyperparameter selection and reproducibility details.}
\label{tab:hyperparameters}
\begin{tabular}{@{}ll@{}}
\toprule
\textbf{Hyperparameter} & \textbf{Search Space} \\
\midrule
Learning Rate         & \{0.003, 0.0003, \textbf{0.0005}\} \\
Minimum Learning Rate &  \textbf{1e-6} \\
Batch Size            & \{8, \textbf{16}, 32\} \\
Optimizer             & \{Adam, \textbf{AdamW}\} \\
Learning rate scheduler & \textbf{Cosine Annealing Warm Restarts} \\
Warmup Epochs         &  \textbf{10} \\
Maximum Epochs        &  \textbf{100}  \\
Early Stopping Patience &  \textbf{30} \\
Gradient clipping        & \{10, \textbf{15} \} \\
Weight Decay          & \{0.01, \textbf{0.05}\} \\
Dropout Rate          & \{0.1, 0.2, \textbf{0.5}\} \\
Node hidden dimension & \textbf{128} \\
Edge dimension \((e_d)\) & \textbf{128 }\\
Edge refinement dimension & \textbf{128} \\
\(L_{max}\) & \textbf{2} \\
Number of Layers      & \{3, \textbf{4}, 5, 6\}\\
Number of RBFs & \textbf{32} \\
Maximum number of neighbors & \textbf{32} \\
Number of attention heads & \{4, \textbf{8}\} \\
Activation Function   & \{ReLU, \textbf{SiLU}\} \\
\(\tau\) & \textbf{0.5} \\
\bottomrule
\end{tabular}
\end{table}

\section{Hyperparameter Sensitivity Analysis}
\label{app:add_hyper_eval}

In Table \ref{tab:layer_results} we present the performance of GDEGAN for different message passing layer, to perform hyperparameter sensitivity analysis. Consistent with prior observations in equivariant GNNs \citep{equipocket}, we find that excessive message passing leads to oversmoothing—neighboring nodes become increasingly similar, losing discriminative power. For GDEGAN, performance peaks at 4 layers (Average DCC=0.6050) and degrades with deeper architectures (5 layers: 0.588, 6 layers: 0.583), despite each additional layer adding nearly 20\% more parameters.

\begin{table}[htbp]
\centering
\caption{Performance of GDEGAN for varying number of message passing layers. All the other parameters were kept same to ensure fair comparison.}
\label{tab:layer_results}
\resizebox{\textwidth}{!}{%
\begin{threeparttable}
\small
\begin{tabular}{@{}lccccccc@{}}
\toprule

\multirow{2}{*}{\textbf{layers}} & \multirow{2}{*}{\shortstack{\textbf{Param} \\ \textbf{(M)}}} & \multicolumn{2}{c}{\textbf{COACH420}} & \multicolumn{2}{c}{\textbf{HOLO4K}} & \multicolumn{2}{c}{\textbf{PDBbind2020}} \\
\cmidrule(lr){3-4} \cmidrule(lr){5-6} \cmidrule(lr){7-8}

& & {\bf DCC$\uparrow$} & {\bf DCA$\uparrow$} & {\bf DCC$\uparrow$} & {\bf DCA$\uparrow$} & {\bf DCC$\uparrow$} & {\bf DCA$\uparrow$} \\
\midrule
3 & 1.50 & 0.528(0.002) & 0.673(0.010) & 0.534(0.002) & 0.731(0.009) & 0.643(0.005) & 0.777(0.002) \\
\textbf{4 (default)} & \textbf{1.90 }& \bf{0.580(0.008)} & \bf{0.707(0.009)} & \bf{0.560(0.013)} & \bf{0.788(0.011)} & \bf{0.675(0.010)} & \bf{0.826(0.011)} \\
5 & 2.30 & 0.564(0.002) & 0.680(0.010) & 0.544(0.003) & 0.751(0.008) & 0.658(0.004) & 0.800(0.010) \\
6 & 2.70 & 0.550(0.005) & 0.681(0.004) & 0.541(0.002) & 0.749(0.012) & 0.660(0.010) & 0.800(0.017) \\
\bottomrule
\end{tabular}
\end{threeparttable}
}
\end{table}

\end{document}